\documentclass{article}

\usepackage{tcolorbox}
\usepackage[dvipsnames]{xcolor} 
\usepackage{tcolorbox}

\usepackage[preprint]{neurips_2026}

\usepackage[utf8]{inputenc} 
\usepackage[T1]{fontenc}    
\usepackage{hyperref}       
\usepackage{url}            
\usepackage{booktabs}       
\usepackage{amsfonts}       
\usepackage{nicefrac}       
\usepackage{microtype}      
\usepackage{xcolor}         

\usepackage{booktabs}
\usepackage{adjustbox}
\usepackage{graphicx}
\usepackage{amsmath,amsthm,amssymb}
\usepackage{thmtools}
\usepackage{multirow}
\usepackage{bbm}
\usepackage{authblk}
\usepackage{kotex}
\usepackage{multicol}
\usepackage{wrapfig,lipsum}
\usepackage{subcaption}
\usepackage{makecell}
\usepackage{multirow}
\newtheorem{theorem}{Theorem}[section]
\newtheorem{proposition}[theorem]{Proposition}

\usepackage{amsfonts}
\usepackage{enumerate}
\usepackage{setspace}
\usepackage{enumitem}
\usepackage{multirow}
\usepackage{graphicx}
\usepackage{caption,stackengine}
\usepackage{bm}

\title{A Composite Activation Function\\
  for Learning Stable Binary Representations}

\author{%
  Seokhun Park$^{1}$, Choeun Kim$^{1}$, Kwanho Lee$^{1}$, Sehyun Park$^{1}$, Insung Kong$^{2}$,  Yongdai Kim$^{1}$ \\
  $^{1}$Department of Statistics, Seoul National University\\
  $^{2}$Department of Applied Mathematics, University of Twente\\ 
  \texttt{\{shrdid, kimchoeun, khlee0527\}@snu.ac.kr}, \texttt{insung.kong@utwente.nl}, \\
  \texttt{ps\_hyen@snu.ac.kr, ydkim0903@gmail.com}\\
}

\begin{document}

\maketitle

\begin{abstract}
Activation functions play a central role in neural networks by shaping internal representations. 
Recently, learning binary activation representations has attracted significant attention due to their advantages in computational and memory efficiency, as well as interpretability. 
However, training neural networks with Heaviside activations remains challenging, as their non-differentiability obstructs standard gradient-based optimization.
In this paper, we propose \textit{Heavy-Tailed Activation Function (HTAF)}, a smooth approximation to the Heaviside function that enables stable training with gradient-based optimization. 
We construct HTAF as a sigmoid–hyperbolic tangent composite function and theoretically show that it maintains a large gradient mass around zero inputs while exhibiting slower gradient decay in the tail regions.
We show that Spiking Neural Networks, Binary Neural Networks and Deep Heaviside neural Networks can be trained stably using HTAF with gradient-based optimization.
Finally, we introduce Implicit Concept Bottleneck Models (ICBMs), an interpretable image model that leverages HTAF to induce discrete feature representations.
Extensive experiments across various architectures and image datasets demonstrate that ICBM enables stable discretization while achieving prediction performance comparable to or better than standard models. 
\end{abstract}

\section{Introduction}
Activation functions play a central role in neural networks by transforming continuous signals into structured representations that enable effective learning and prediction. 
Through this transformation, activation functions critically influence not only prediction performance (\citep{bingham2023efficient, hayou2019impact}), but also the robustness (\citep{singla2021low, xie2020smooth}) and interpretability of learned representations (\citep{chen2024neural, fel2023craft, kim2018interpretability, ghorbani2019towards}).
Early neural network models (\cite{rosenblatt1958perceptron}) were based on threshold-based activation functions, which are equivalent to the Heaviside activation function $x \mapsto \mathbb{I}(x \geq 0)$. 
However, such non-differentiable activations made gradient-based learning intractable. 
To address this issue, differentiable surrogate activation functions such as the sigmoid function were introduced (\cite{rumelhart1986learning}). 
Subsequently, to mitigate the gradient vanishing problem, Rectified Linear Units (ReLU) became widely adopted (\cite{glorot2011deep}).

Recently, the Heaviside activation function has attracted increasing attention again in the AI community due to their potential benefits in memory and computational efficiency (\citep{yin2019understanding, ergen2023globally, qin2020binary}), interpretability (\citep{wu2015adjustable, yue2024learning, leblanc2024seeking}), and connections to biological neural networks (\citep{zenke2018superspike, li2021differentiable, tavanaei2019deep}). 
Furthermore, the Heaviside activation function is closely related to quantized and binarized neural networks (\citep{courbariaux2016binarized, hubara2018quantized, qin2020binary}), which have become increasingly important for energy-efficient deployment of 
large deep neural networks such as large language models (LLMs, \citep{wang2023bitnet, ma2024era}).

Although the Heaviside activation function offers several advantages, training neural networks that use it with gradient-based optimization remains difficult because the function is inherently non-differentiable.
To address this issue, a substantial body of work (\cite{bengio2013estimating, li2021differentiable, zenke2018superspike, iliev2017approximation, neftci2019surrogate}) has focused on developing methods for learning neural networks with the Heaviside activation function.
\cite{bengio2013estimating} proposed Straight-Through Estimator (STE), which enables gradient-based optimization of non-differentiable functions by using surrogate gradients.
While STE serves as a practical workaround, it introduces a mismatch between the forward and backward passes and often leads to biased or suboptimal optimization (\cite{yin2019understanding}).
Another approach is to replace the Heaviside activation function with a smooth surrogate (\cite{iliev2017approximation}), most notably the sigmoid function, and encourage the activations to approach binary values by increasing the slope or scale of the nonlinearity. 
While this approach enables the use of standard gradient-based optimization, it suffers from severe optimization instability (\citep{glorot2010understanding, chen2024training}) including the gradient vanishing problem.

In this paper, we propose a novel smooth activation function, called \textit{Heavy-Tailed Activation Function (HTAF)}, which closely approximates the Heaviside function while enabling stable and accurate training with standard gradient-based optimization.
The key idea is to design an activation function that preserves a large gradient near zero (to approximate the Heaviside function closely), while allowing the gradient to decay slowly for large-magnitude inputs
(to avoid the gradient vanishing problem).
Specifically, HTAF has two parameters that allow the gradient magnitude in the tail regions to be adjusted without affecting the gradient near zero.
That is, by adjusting these two parameters, HTAF allows the gradient in the tail region to decay more slowly, while still preserving a high gradient near zero.

As an application of HTAF, we consider two problems.
Firstly, using HTAF, we propose a general learning framework for neural networks with the Heaviside activation function, including Spiking Neural Networks (SNNs, \citep{tavanaei2019deep, fang2021deep, maass1997networks}), Binary Neural Networks (BNNs, \cite{qin2020binary,rastegari2016xnor,liu2018bi}), and Deep Heaviside Neural Networks (DHNs, \cite{kong2025expressivity,ergen2023globally}), enabling stable end-to-end optimization without surrogate gradients for activation functions.
Second, we propose Implicit Concept Bottleneck Models (ICBMs), an extension of Concept Bottleneck Models (CBMs, \cite{koh2020concept}), which employ HTAF as the activation function in the final layer of the backbone feature extractor during training.
HTAF encourages each feature to take values close to either zero or one.
At inference time, HTAF is replaced by the Heaviside function, yielding exact binary activations and resulting in fully discrete and interpretable features without substantially sacrificing accuracy.

To sum up, our key contributions are summarized as follows.

\begin{enumerate}
    \item We introduce \textit{Heavy-Tailed Activation Function (HTAF)}, a novel smooth activation function, which approximates the Heaviside activation function.
    
    \item We theoretically show that, by appropriately tuning its two parameters, 
     HTAF can control the rate of gradient decay in tail regions while keeping a large gradient near zero, thereby enabling stable learning of deep neural networks (DNNs) with HTAF activation with standard gradient-based optimization.
    
    \item We demonstrate that HTAF enables gradient-based learning of DNNs with the Heaviside activation function, allowing stable end-to-end optimization and improving prediction performance over existing training methods, including those based on surrogate gradients.

    \item We propose Implicit Concept Bottleneck Models (ICBM), an interpretable image model that leverages HTAF to induce binary latent representations corresponding to implicit concepts. We show that the resulting model is interpretable while achieving competitive or improved prediction performance across various image datasets compared to non-interpretable models.

\end{enumerate}

\section{Related works}

\subsection{Deep Neural Networks with the Heaviside activation function}
\label{sec:learning_binary_act}

\paragraph{Heaviside-Activated Neural Networks and their advantages.}
DNNs with the Heaviside activation function have been widely studied in various neural network models, including Spiking Neural Networks (SNNs, \cite{zenke2018superspike,fang2021deep,hundrieser2025universal}), Binary Neural Networks (BNNs, \cite{qin2020binary,rastegari2016xnor}), and Deep Heaviside Neural Networks (DHNs, \cite{kong2025expressivity}).
SNNs use the Heaviside function to model discrete spike generation, offering a biologically inspired and energy-efficient computation paradigm. 
Similarly, in BNNs, activations and weights are constrained to binary values, allowing significant reductions in memory usage and computational cost.  
DHNs are standard deep neural networks employing the Heaviside activation function.

A key advantage of such models is their efficiency during inference.  
The output of the Heaviside activation function can be represented using a single bit, leading to substantial memory savings compared to floating-point representations.
Moreover, multiplications can be simplified or eliminated, resulting in lightweight computations and faster inference.

\paragraph{Training methods.}
Methods for training neural networks with the Heaviside activation function can be broadly categorized into three approaches.
The first approach (\cite{bengio2013estimating, li2021differentiable, zenke2018superspike}) employs the Heaviside activation function together with surrogate gradients during training.
A representative approach is Straight-Through Estimator (STE, \cite{bengio2013estimating}), which replaces the gradient of 
the Heaviside function with a smooth surrogate function during backpropagation while retaining the original hard thresholding in the forward pass.
Although surrogate-gradient methods have been widely studied and adopted, learning neural networks with the Heaviside activation function via such approaches remains challenging.
Specifically, the reliance on approximate rather than true gradients often leads to unstable optimization dynamics and degraded prediction performance (\cite{yin2019understanding}).

The second line of work (\cite{iliev2017approximation, cao2017hashnet, lahoud2019self}) approximates the Heaviside activation function with smooth activations to enable standard gradient-based optimization, most commonly using the sigmoid function. However, sigmoid-based approximations suffer from the well-known gradient vanishing problem (\citep{glorot2010understanding, chen2024training}).
Specifically, as the sigmoid becomes steeper around $x=0$, inputs quickly enter saturation regimes where gradients vanish, resulting in unstable training or optimization stagnation.

Finally, another line of work (\cite{diehl2015fast, rueckauer2017conversion, huang2024billm, sengupta2019going, cao2015spiking}) first trains conventional DNNs with a smooth activation function and then converts them post-hoc into SNNs or BNNs, thereby avoiding direct training with Heaviside activation function.

\subsection{Concept Bottleneck Models}

\cite{koh2020concept} introduced Concept Bottleneck Models (CBMs), a class of interpretable image models that incorporate human-interpretable concepts as intermediate representations.
In CBMs, the model first predicts a set of predefined concepts and then uses them for class prediction, enabling transparency and intervention at the concept level.

However, CBMs often suffer from reduced prediction performance and limited practicality due to the requirement of concept annotations, which are costly to obtain.
To address this, subsequent works have proposed label-free variants using pretrained models such as Contrastive Language-Image Pre-training (CLIP, \cite{radford2021learning}) and Large Language Models (LLMs, \cite{brown2020language}) to automatically generate concept labels for each image (\cite{oikarinen2023label, yuksekgonul2022post, kulkarni2025interpretable}).

Despite these advances, such approaches still achieve lower prediction performance than standard image models.
Moreover, concept annotations produced by multimodal models such as CLIP can be unreliable and may fail to accurately capture underlying semantic concepts of individual images (\citep{yuksekgonul2022and, kazmierczak2025enhancing}).

\section{Heavy-Tailed Activation Function and its application}
\subsection{Notation}
For a given positive integer $p$, we denote $\{1,...,p\}$ as $[p]$.
Let $\mathbb{R}_{+}$ denote the set of positive real numbers.
For a given input $x \in \mathbb{R}$, we let $\sigma(\cdot)$ and $\psi(\cdot)$ as the sigmoid and hyperbolic tangent functions, respectively, i.e.,
\begin{align*}
\sigma(x) := {1\over 1+\exp(-x)}\:\:\text{and}\:\: \psi(x) := {\exp(x)-\exp(-x)\over \exp(x)+\exp(-x)}.
\end{align*}

\subsection{Heavy-Tailed Activation Function}

For a given input $x \in \mathbb{R}$, we define {Heavy-Tailed Activation Function (HTAF)} as 
\begin{align}
\text{HTAF}(x|\alpha_{0},\alpha_{1}) := \sigma(\alpha_{1}\psi(\alpha_{0} x)),
\label{eq:HTAF}
\end{align}
where $\alpha_{0} \in \mathbb{R}_{+}$ and $\alpha_{1} \in \mathbb{R}_{+}$ are hyperparameters.

\begin{figure}[t]
    \centering
    \includegraphics[width=1.0\linewidth]{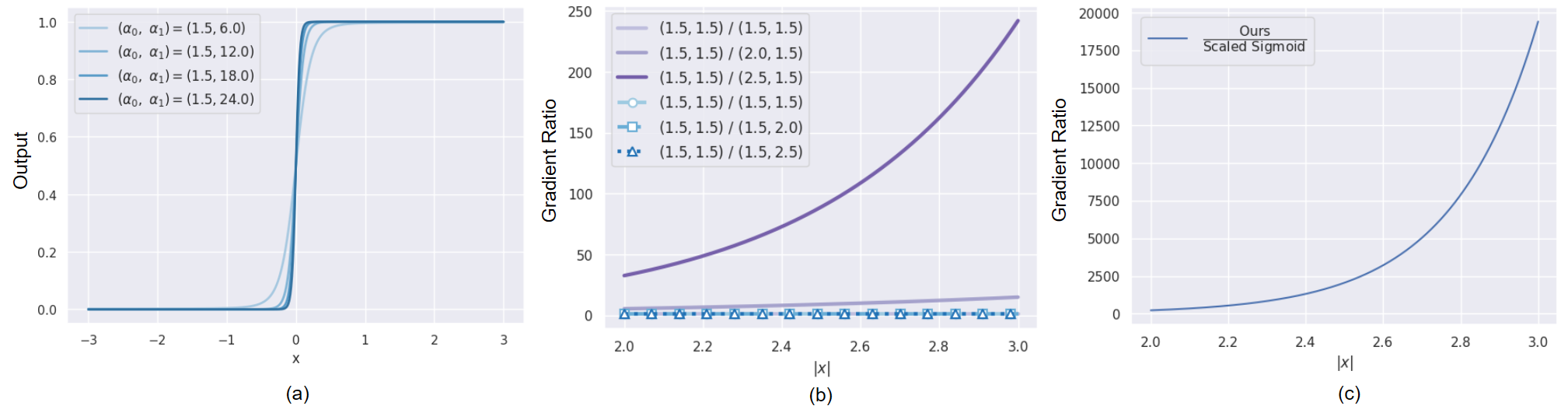}
    \caption{\textbf{Outputs and gradients of HTAF.}
(a) Output curves of HTAF with fixed $\alpha_{0}=1.5$ and varying $\alpha_{1} \in \{6.0, 12.0, 18.0, 24.0\}$.
(b) Ratios of gradients relative to the baseline case $\alpha_{0} = \alpha_{1} = 1.5$, obtained by varying $\alpha_{0}$ with $\alpha_{1}$ fixed at 1.5 and varying $\alpha_{1}$ with $\alpha_{0}$ fixed at 1.5. Blue curves correspond to varying $\alpha_{1}$, while purple curves correspond to varying $\alpha_{0}$.
(c) Ratio of derivatives between HTAF and Scaled Sigmoid in the tail region. 
}
\label{fig_all_plots}
\end{figure}

HTAF has two key properties. 
First, it induces binarized (near-binary) outputs, i.e., $\mathrm{HTAF}(x|\alpha_{0},\alpha_{1}) \approx 0\:\:\text{or}\:\: 1$
for all $x \in \mathbb{R}$ with an appropriate choice of $\alpha_{0}$ and $\alpha_{1}$, as shown in Figure \ref{fig_all_plots}. 
Second, its gradient decays slowly in the tail regions while maintaining a large gradient magnitude around $x=0$, ensuring stable gradient flow even for large input magnitudes while still closely approximating the Heaviside function.
Due to these properties, HTAF enables stable learning of binary representations using a standard gradient descent algorithm. 
In the following, we describe these two properties in detail.

\begin{proposition}
\label{thm:heavi_approx}
For any $\epsilon \in (0,1/2)$, suppose that $\alpha_0$ and $\alpha_1$ satisfy
\begin{align*}
\alpha_{1} > \log \left({1-\epsilon \over \epsilon}\right) \quad \text{and} \quad \alpha_{0} \geq 
{1\over 2\epsilon}
\log \left(
{\alpha_{1} + \log\left({1-\epsilon \over \epsilon}\right)
\over
\alpha_{1} - \log\left({1-\epsilon \over \epsilon}\right)}
\right).
\end{align*}
Then, we have
\begin{equation}
\label{eq:approx-HTAF}
\sup_{x \geq \epsilon}
\left|1-\text{HTAF}(x|\alpha_{0},\alpha_{1})\right| 
\leq \epsilon 
\quad \text{and} \quad 
\sup_{x \leq -\epsilon}
\left|\text{HTAF}(x|\alpha_{0},\alpha_{1})\right| 
\leq \epsilon.
\end{equation}
\end{proposition}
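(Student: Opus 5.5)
The plan is to reduce both bounds in \eqref{eq:approx-HTAF} to one scalar inequality at the boundary point $x=\epsilon$, using symmetry and monotonicity of HTAF, and then invert the hyperbolic tangent explicitly. Throughout, write $L := \log\left({1-\epsilon \over \epsilon}\right)$. Since $\epsilon \in (0,1/2)$, we have $L>0$, and the hypothesis gives $\alpha_1 > L$.

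\emph{Symmetry reduction.} The function $\psi$ is odd, and $\sigma(-u) = 1-\sigma(u)$. Hence
\begin{equation*}
\text{HTAF}(-x|\alpha_0,\alpha_1) = \sigma(-\alpha_1\psi(\alpha_0 x)) = 1-\text{HTAF}(x|\alpha_0,\alpha_1).
\end{equation*}
So the second bound in \eqref{eq:approx-HTAF} is equivalent to the first, and it suffices to show $1-\text{HTAF}(x|\alpha_0,\alpha_1) \le \epsilon$ for all $x\ge\epsilon$. Note also that $0<\text{HTAF}<1$, so the absolute values can be dropped.

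\emph{Monotonicity reduction.} HTAF is a composition of increasing maps, since $\alpha_0,\alpha_1>0$ and $\sigma,\psi$ are strictly increasing. Therefore $\sup_{x\ge\epsilon}\bigl(1-\text{HTAF}(x)\bigr) = 1-\text{HTAF}(\epsilon)$. It remains to prove $\sigma(\alpha_1\psi(\alpha_0\epsilon)) \ge 1-\epsilon$. Inverting the sigmoid, $\sigma(u)\ge 1-\epsilon$ holds iff $u \ge \log((1-\epsilon)/\epsilon) = L$. The target is therefore $\psi(\alpha_0\epsilon) \ge L/\alpha_1$.

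\emph{Inverting $\psi$.} The first hypothesis gives $t:=L/\alpha_1 \in (0,1)$, so $\psi^{-1}(t)=\tfrac12\log\frac{1+t}{1-t}$ is well defined. Since $\psi$ is increasing, $\psi(\alpha_0\epsilon)\ge t$ iff
\begin{equation*}
\alpha_0\epsilon \ge \tfrac12\log\frac{1+t}{1-t} = \tfrac12\log\frac{\alpha_1+L}{\alpha_1-L}.
\end{equation*}
This is exactly the assumed lower bound on $\alpha_0$ after dividing by $\epsilon$, which completes the argument.

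There is no real obstacle. The only care needed is checking that $t<1$, which is where the condition $\alpha_1>L$ is used, and verifying the closed form of $\operatorname{artanh}$.
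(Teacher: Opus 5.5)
Your proof is correct and takes essentially the same route as the paper: invert the sigmoid to get the threshold $\log((1-\epsilon)/\epsilon)$, then solve the resulting $\tanh$ inequality, which yields exactly the stated lower bound on $\alpha_0$. You also spell out two steps the paper leaves implicit: the reduction to the boundary point $x=\epsilon$ by monotonicity, and the negative side via the symmetry $\text{HTAF}(-x|\alpha_0,\alpha_1)=1-\text{HTAF}(x|\alpha_0,\alpha_1)$.
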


\begin{wrapfigure}{r}{0.45\textwidth}
    \centering
    \vspace{-10pt}
    \includegraphics[width=0.43\textwidth]{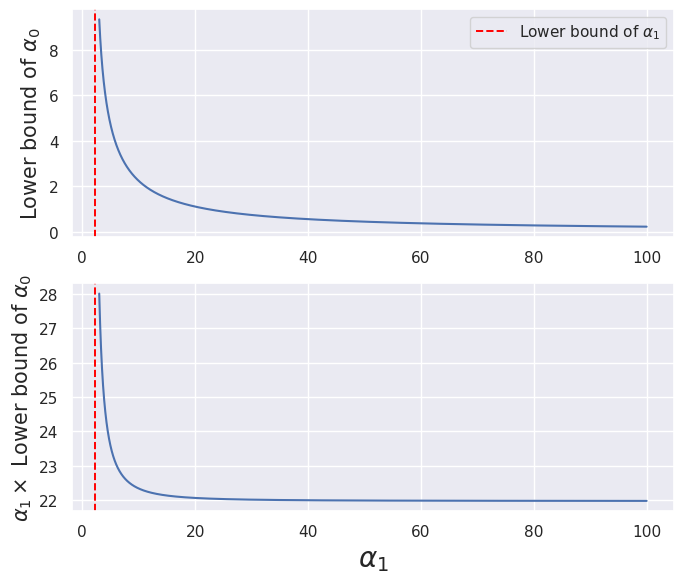}
    \caption{\textbf{Plots of the lower bound of $\alpha_{0}$ and $\alpha_{1}$ times the lower bound of $\alpha_{0}$ as functions of $\alpha_{1}$ for $\epsilon = 0.1$.}}
    \label{fig:ratio_lowerbound}
    \vspace{-10pt}
\end{wrapfigure}

Proposition \ref{thm:heavi_approx} presents the explicit sufficient conditions on the parameters $\alpha_0$ and $\alpha_1$ under which HTAF uniformly approximates the Heaviside function outside an arbitrarily small neighborhood of the origin. 
In particular, for any prescribed accuracy $\epsilon \in (0,1/2)$, choosing $\alpha_0$ and $\alpha_1$ according to the stated bounds guarantees that HTAF is within $\epsilon$ of the Heaviside function on $(-\infty,-\epsilon] \cup [\epsilon,\infty)$. 

Based on Proposition \ref{thm:heavi_approx}, we next discuss how to select the parameters $\alpha_{0}$ and $\alpha_{1}$ in practice.
Figure \ref{fig:ratio_lowerbound} illustrates the lower bounds of $\alpha_{0}$ and the product of $\alpha_{1}$ and the lower bound of $\alpha_{0}$ as functions of $\alpha_{1}$ for $\epsilon = 0.1$.
Furthermore, we observe that the product of $\alpha_{1}$ and the lower bound of $\alpha_{0}$ converges to 22 as $\alpha_{1}$ increases.
Therefore, one may consider choosing a small $\alpha_{0}$ and a large $\alpha_{1}$ while satisfying the condition $\alpha_{0}\alpha_{1} = 22$.
See Appendix \ref{app_selct_alpha01} for an ablation study on this choice.

An important message of Proposition \ref{thm:heavi_approx} is that a neural network with the Heaviside activation function can be approximated
by a DNN with HTAF activation with $O(\epsilon)$ approximation error where the constant term in $O(\epsilon)$
depends on the architecture of the neural network (i.e. the number of layers and number of nodes). See
Appendix \ref{app_dhn_approx} for a theoretical justification.

We now study the gradient behavior of the proposed HTAF. 
The gradient of HTAF can exhibit a heavy-tailed behavior: that is, it decays slowly for large-magnitude inputs while maintaining a large magnitude near zero.
This property is crucial for preventing premature gradient vanishing when approximating the Heaviside function.
The following theorem summarizes this nice property of HTAF.

\begin{theorem}
\label{thm:main_theorem}
    For any $\alpha_{0} \in \mathbb{R}_{+}$ and $\alpha_{1} \in \mathbb{R}_{+}$, we have the following:
    \begin{enumerate}[label=(\alph*)]
    \item We have
        \begin{align}  
        \lim_{|x| \to \infty} \frac{\partial HTAF(x|\alpha_0, \alpha_1)/\partial x}{exp(-2\alpha_0|x|)} = C(\alpha_0, \alpha_1),
        \end{align}
        where $C(\alpha_0, \alpha_1) =  \alpha_{0}\alpha_{1}\sigma(\alpha_{1})(1-\sigma(\alpha_{1}))$. \label{eq:new(a)}
    \item We have 
    \begin{align*}
    {\partial \text{HTAF}(x|\alpha_{0},\alpha_{1}) \over \partial x} \bigg |_{x=0} &= {\alpha_{0}\alpha_{1} \over 4}. 
    \end{align*} 
    \label{eq:(a)}
\end{enumerate}
\end{theorem}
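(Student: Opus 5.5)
The plan is a direct chain-rule computation, followed by a limit evaluation for part (a) and a point evaluation for part (b). Write $u=\alpha_0 x$ and use the two standard identities
\[
\sigma'(t)=\sigma(t)(1-\sigma(t)), \qquad \psi'(u)=1-\psi(u)^2=\operatorname{sech}^2(u)=\frac{4}{(e^{u}+e^{-u})^2}.
\]
The chain rule then gives
\begin{align*}
\frac{\partial\,\mathrm{HTAF}(x|\alpha_0,\alpha_1)}{\partial x}
=\alpha_0\alpha_1\,\sigma\bigl(\alpha_1\psi(\alpha_0x)\bigr)\bigl(1-\sigma(\alpha_1\psi(\alpha_0x))\bigr)\bigl(1-\psi(\alpha_0x)^2\bigr).
\end{align*}
All later steps start from this factorisation.

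For part (b), set $x=0$. Then $\psi(0)=0$, $\sigma(0)=1/2$ and $1-\psi(0)^2=1$. The derivative is therefore $\alpha_0\alpha_1\cdot\tfrac12\cdot\tfrac12=\alpha_0\alpha_1/4$, which is the claimed value.

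For part (a), divide the factorisation by $\exp(-2\alpha_0|x|)$ and treat the two factors separately.
\begin{itemize}
\item \textbf{Sigmoid factor.} As $x\to+\infty$ we have $\psi(\alpha_0x)\to1$. As $x\to-\infty$ we have $\psi(\alpha_0x)\to-1$. The map $t\mapsto\sigma(t)(1-\sigma(t))$ is continuous and even. Hence in both directions $\sigma(\alpha_1\psi(\alpha_0x))(1-\sigma(\alpha_1\psi(\alpha_0x)))\to\sigma(\alpha_1)(1-\sigma(\alpha_1))$. This accounts for the $\sigma(\alpha_1)(1-\sigma(\alpha_1))$ part of $C(\alpha_0,\alpha_1)$.
\item \textbf{Hyperbolic-tangent factor.} This is the only delicate step: one must compute $\lim_{|u|\to\infty}(1-\psi(u)^2)e^{2|u|}$. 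Since $\operatorname{sech}^2$ is even, it suffices to take $u\to+\infty$. Then
\[
(1-\psi(u)^2)\,e^{2u}=\frac{4e^{2u}}{(e^{u}+e^{-u})^2}=\frac{4}{(1+e^{-2u})^2}\to 4.
\]
\end{itemize}

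The limit in (a) is the product of the two factor limits times the prefactor $\alpha_0\alpha_1$. The constant contributed by the hyperbolic-tangent factor is where I expect the main obstacle: under the paper's definition of $\psi$, this limit equals $4$, not $1$.

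Consequently, the computation as planned gives
\[
\lim_{|x|\to\infty}\frac{\partial_x\mathrm{HTAF}(x|\alpha_0,\alpha_1)}{e^{-2\alpha_0|x|}}=4\,\alpha_0\alpha_1\sigma(\alpha_1)(1-\sigma(\alpha_1)).
\]
This matches the stated $C(\alpha_0,\alpha_1)$ only up to the factor $4$ coming from $\operatorname{sech}^2 u\sim 4e^{-2|u|}$. I do not see a rewrite of this step that removes the factor. To obtain (a) exactly as the paper words it, the comparison function would have to be $4\exp(-2\alpha_0|x|)$, or equivalently $C$ would have to carry an extra factor $4$.

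The qualitative conclusion is unaffected either way. The derivative decays like $e^{-2\alpha_0|x|}$, so its tail rate is governed by $\alpha_0$ alone, while its value at zero is $\alpha_0\alpha_1/4$ by (b). Enlarging $\alpha_1$ at fixed $\alpha_0$ therefore raises the gradient near zero without speeding up the tail decay.
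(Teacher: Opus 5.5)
Your computation is correct and follows the same route as the paper's proof. Both use the chain rule and the identity $1-\psi(u)^2 = 4e^{-2u}/(1+e^{-2u})^2$, then take the limit for (a) and evaluate at $x=0$ for (b). The factor $4$ you flag is a genuine error in the statement, not in your argument. The paper's proof itself writes $4\exp(-2\alpha_0x)/(1+\exp(-2\alpha_0x))^2$ in the derivative, then drops the $4$ in the next line when dividing by $\exp(-2\alpha_0|x|)$. So the correct limit in (a) is $4\alpha_0\alpha_1\sigma(\alpha_1)(1-\sigma(\alpha_1))$. Part (b) is correct as stated, and your explicit handling of $x\to-\infty$ via evenness is slightly more complete than the paper's ``without loss of generality''.
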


Theorem \ref{thm:main_theorem} - \ref{eq:new(a)} implies that, in the tail region, the gradient of HTAF is governed mostly by $\alpha_{0}$.
In Figure \ref{fig_all_plots}-(b), we observe that as $\alpha_{0}$ increases, the gradient in the tail region decreases.
On the other hand, Theorem \ref{thm:main_theorem}-\ref{eq:(a)} implies that the slope of HTAF at the origin is determined by the product of the parameters $\alpha_{0}$ and $\alpha_{1}$.
This result implies that we can slow down the gradient decay by choosing a small $\alpha_0$, while maintaining a large gradient magnitude at zero by choosing a sufficiently large $\alpha_1$ such that $\alpha_0\alpha_1$ remains large.
To sum up, from \ref{eq:new(a)} and \ref{eq:(a)} of Theorem \ref{thm:main_theorem}, 
it follows that when $\alpha_{0}$ is chosen to be small but $\alpha_{1}$ to be sufficiently large, HTAF maintains a sufficiently large gradient near zero while alleviating gradient vanishing in the tail regions.

We compare the gradient behavior of HTAF with that of Scaled Sigmoid (SS), which is defined as $\sigma_{\beta_{0}}(x) := \sigma(\beta_{0}x)$.
The parameters are set to be $\alpha_{0}=1.5$, $\alpha_{1}=5.0$, and $\beta_{0}=7.5$, ensuring that both functions have identical derivatives at $x=0$.
Figure \ref{fig_all_plots}-(c) presents the ratio of the HTAF derivative magnitude to the SS derivative magnitude in the tail regions, \( |x| \in [2,3] \).
The result clearly shows that the gradient of HTAF is much larger than that of SS in the tail regions. 

Theorem \ref{thm:main_theorem} suggests that choosing a small $\alpha_{0}$ and large $\alpha_{1}$ reduces gradient vanishing 
(See Appendix \ref{app_grad_2}). 
However, an excessively small $\alpha_0$ may cause numerical issues, because maintaining a large gradient of HTAF at zero requires an excessively large $\alpha_1$, which can make the gradient of HTAF numerically unstable and effectively vanish.
The results of numerical studies for the choice of $\alpha_0$ and $\alpha_1$, which are presented in Appendix \ref{app_selct_alpha01}, suggest that extremely small values of $\alpha_{0}$ or extremely large values of $\alpha_{1}$ should be avoided, as they can lead to unstable training.

\paragraph{Applicability as a general-purpose activation function.}
Although HTAF is designed to approximate the Heaviside function, it can 
also replace smooth activations such as sigmoid and ReLU. In particular, a 
DNN with HTAF achieves a near-minimax optimal convergence rate for smooth 
target functions when $\log(\alpha_0\alpha_1)$ is bounded; see Appendix 
\ref{app:minmax}. 
It is known that DHNs have inferior approximation properties for smooth 
target functions compared to standard DNNs with smooth activations 
(\cite{kong2025expressivity}). 
Surprisingly, however, DNNs with HTAF perform well at approximating both smooth functions and DHNs.
One possible explanation is that the degree to which DNNs with HTAF activations approximate DHNs depends on the network architecture (e.g., the number of layers and the number of nodes per layer), which may lead to different asymptotic behaviors.
This issue would be beyond the scope of this paper, and we leave it for future work.

\subsection{Applications}
\subsubsection{Training Neural Networks with the Heaviside activation function}

While Heaviside activation functions enable binary representations that are computationally efficient during inference, they preclude the direct use of gradient-based optimization because their gradients are zero everywhere.
Existing approaches therefore often rely on surrogate gradients, which introduce a mismatch between the forward and backward passes, or on smooth activation-based approximations that remain susceptible to gradient vanishing.

The first application of HTAF is surrogate-free training of neural networks with the Heaviside activation function, including SNNs, BNNs, and DHNs. 
Specifically, we replace each Heaviside activation functions with HTAF during training, allowing the network to be optimized using a standard gradient descent algorithm.
After training, HTAF is replaced by the Heaviside function at the inference phase. 
This procedure preserves the learned weights and produces exact discrete activations, allowing the resulting model to benefit from the memory and computational efficiency of Heaviside-based inference without losing accuracy much.


\begin{figure}[t]
    \centering
    \includegraphics[width=0.95\linewidth]{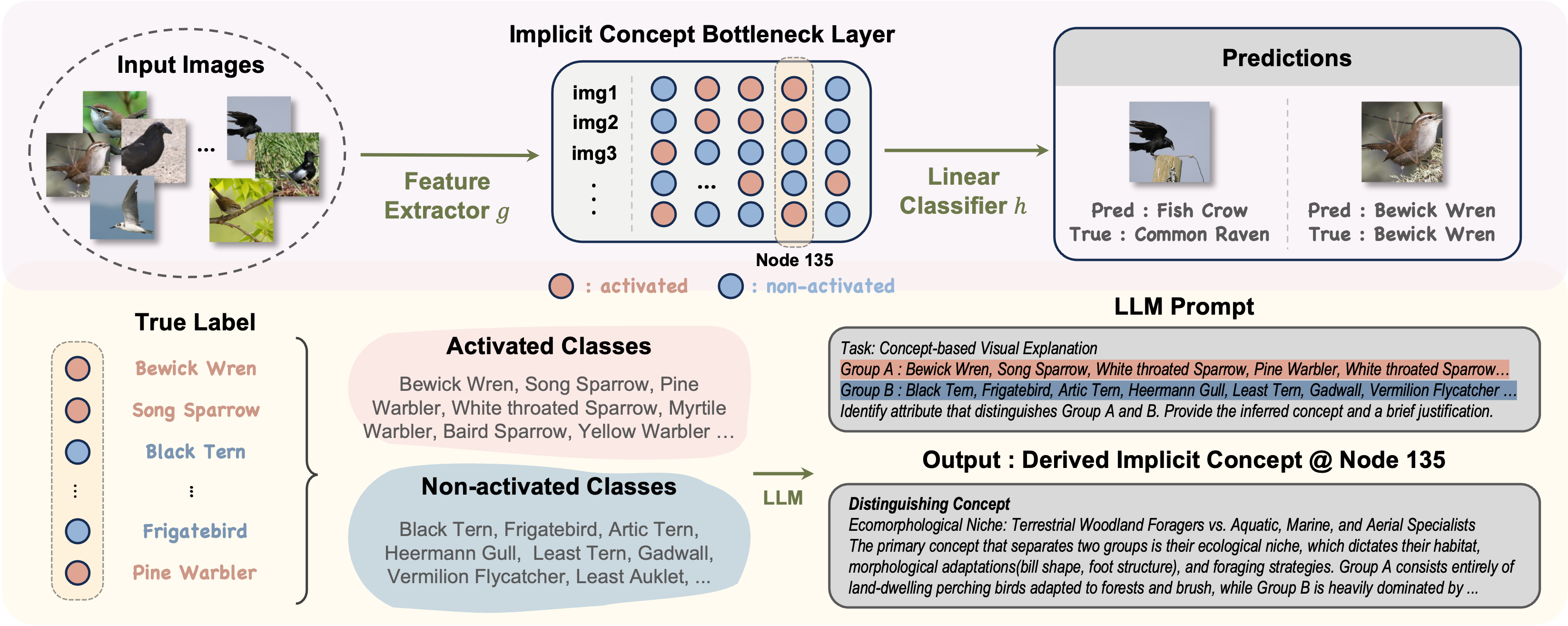}
    \caption{\textbf{Overview of Implicit Concept Bottleneck Models}.}
    \label{fig_overview}
    \vskip -0.5cm
\end{figure}

\subsubsection{Implicit Concept Bottleneck Models}
\label{sec:icbm-app}

Existing CBMs typically rely on either human-annotated or AI-annotated concepts. 
Human annotation is costly and labor-intensive, while AI-generated annotations can be unreliable.
We propose a new approach to obtaining concepts for CBMs: instead of first asking an AI system to define or annotate concepts, we first learn concepts directly from data and then ask AI to interpret their meanings.
This allows us to obtain concepts that are naturally relevant to the final prediction task.
We refer to these data-driven concepts as implicit concepts. 
For implicit concepts to be desirable, we require them to satisfy the following properties:
\begin{enumerate}
\item \textbf{Binarity}: implicit concepts should be binary, indicating the presence or absence of a given concept.
\item \textbf{Class specificity}: implicit concepts should be class-specific; that is, for images belonging to the same class, each implicit concept should tend to be consistently activated or deactivated.
\item \textbf{Completeness}: implicit concepts should be complete, in the sense that predictions made solely from implicit concepts should not suffer from a significant loss in accuracy.
\end{enumerate}

We propose Implicit Concept Bottleneck Models (ICBMs) to estimate implicit concepts. 
To be more specific, consider an arbitrary image classification model $f$, which can be decomposed into a feature extractor $g$ and a linear classifier $h$, i.e., $f(\mathbf{x}) = h(g(\mathbf{x}))$, where $\mathbf{x}$ is an input image. 
We define ICBM as a class of models where the activation function in the final layer of $g$ is replaced with HTAF during training to approximate the Heaviside function, and with the Heaviside function at inference to produce exact binary outputs. 
In this way, $g$ estimates implicit concepts.
Because HTAF enables stable training of ICBM with gradient-based optimization, ICBM retains prediction performance comparable to, or better than, that of conventional image models, as shown in Section \ref{sec:HTAF_perform}.

We now describe how each implicit concept can be interpreted once it is estimated by ICBMs.
Since each concept is class-specific, we divide the class labels into two groups—one for activation and the other for non-activation.
We then ask an LLM to explain a key feature that distinguishes these two groups.
If there are pre-defined concept candidates provided by domain experts or learned automatically from data (\cite{ghorbani2019towards}), we can ask an LLM to select one from the candidates or to generate a new concept. 
The resulting description can be interpreted as the semantic meaning of a given implicit concept.
An overview of ICBMs is presented in Figure \ref{fig_overview}.

Compared to CBM (\cite{koh2020concept}), ICBM has the following advantages.
First, ICBM does not require annotated concept labels for images, making it applicable to a much broader range of datasets.
Second, despite imposing this discrete bottleneck, ICBM does not suffer from a degradation in prediction performance, suggesting that the learned implicit concepts are complete.
Our empirical results in Section \ref{sec:class-spec} further show that the implicit concepts learned by ICBM are also class-specific.

\section{Experiments}

Comprehensive experimental results across a wide range of settings and tasks, along with ablation studies, applications to Large Language Models, and detailed descriptions of datasets, learning algorithms, and hyperparameter selection, are provided in Appendices \ref{app:detials_all_exp} to \ref{app_ICBM_results}.


\subsection{Deep Neural Networks with the Heaviside activation function}

\begin{wraptable}{r}{0.52\linewidth}
\vspace{-0.4cm}
\centering
\footnotesize
\caption{\textbf{Classification accuracies (\%) on CIFAR-10 and CIFAR-100 for SNNs, BNNs and DHNs.}}
\label{table:combined_image}

\begin{tabular}{ccccc}
\toprule
Model & Dataset & HTAF & STE & SS \\
\midrule
\multirow{2}{*}{SNN}
 & CIFAR-10  & 91.0\% & \textbf{91.2\%} & 90.6\% \\
 & CIFAR-100 & \textbf{66.9\%} & 63.9\% & 66.2\% \\
\midrule
\multirow{2}{*}{BNN}
 & CIFAR-10  & \textbf{75.8\%} & 75.3\%  & 60.6\% \\
 & CIFAR-100 & \textbf{46.1\%} & 45.2\%  & 31.3\% \\
\midrule
\multirow{2}{*}{DHN}
 & CIFAR-10  & \textbf{88.6\%} & 83.3\%  & 74.5\% \\
 & CIFAR-100 & \textbf{57.6\%} & 56.7\%  & 45.6\% \\
\bottomrule
\end{tabular}
\vspace{-0.3cm}
\end{wraptable}


We compare the prediction performance of SNNs, BNNs and DHNs using STE, HTAF and SS.
When using HTAF or SS, all Heaviside activation functions are replaced with these functions during training and reverted to the Heaviside function at inference.
For STE method, the backward gradients are computed using a smooth surrogate function during training.
For each model, we adopt backbone architectures as follows: Spiking Element-Wise ResNet (SEW-ResNet, \cite{fang2021deep}) for SNNs, Bi-Real Net (\cite{liu2018bi}) for BNNs, and WideResNet (\cite{zagoruyko2016wide}) for DHNs.
Table \ref{table:combined_image} shows the average prediction accuracy over five runs on CIFAR-10 and CIFAR-100 datasets for all models.

These results indicate that HTAF achieves prediction performance competitive with STE in learning SNNs, while consistently outperforming both STE and SS in BNNs and DHNs.
Overall, SS exhibits relatively poor performance in BNNs and DHNs, which is likely attributable to the gradient vanishing problem. Empirical evidence supporting this observation is provided in Appendix \ref{app_grad_1}.

Note that HTAF requires no special optimization techniques beyond standard gradient-based optimization. 
These results demonstrate the practical effectiveness of HTAF for training neural networks with Heaviside activation functions.
Additional experiments including comparison ANN-to-SNN method are in Appendix \ref{app:ann-to-snn} and the details of the experiments are in Appendix \ref{app:detials_all_exp}.

\subsection{Implicit Concept Bottleneck Models}

\begin{table*}[h]
\centering
\footnotesize
\caption{\textbf{Results of the average prediction accuracy with and without HTAF (i.e., using identity or ReLU) at the final layer of the feature extractor.}}
\vskip -0.2cm
\label{table_HTAF_predict}
\begin{tabular}{ccccccc}
\toprule
Dataset  & ResNet+ HTAF & ResNet & WideResNet+ HTAF & WideResNet & ViT+ HTAF & ViT \\ \midrule
\textsc{CIFAR-10} & 94.6\% & 94.6\% & 96.1\% & 95.7\% & 96.6\% & \textbf{96.7\%} \\
\textsc{CIFAR-100} & 80.8\% & 80.0\% & 79.5\% & 78.2\% & \textbf{89.0\%} & 88.6\%\\
\textsc{CUB-200} & 74.8\% & 74.5\% & 77.3\% & 77.5\% & \textbf{88.5\%} & 87.1\%\\
\textsc{Tiny Imagenet} & 72.2\%  & 72.1\%  & 85.4\% & 85.7\% & 86.6\% & \textbf{86.7}\% \\ 
\bottomrule
\end{tabular}
\end{table*}

\subsubsection{Prediction performance}
\label{sec:HTAF_perform}

In this section, we empirically demonstrate that HTAF can be effectively incorporated into various image classification models. 
We consider ResNet (\citep{he2016deep}), WideResNet (\citep{zagoruyko2016wide}), and ViT \citep{dosovitskiy2020image} as backbone image models.
We compare the prediction performance of the ICBM versions of these models to that of their original counterparts.
Here, the ICBM version refers to a model trained by replacing the activation function at the final layer of the feature extractor with HTAF, and evaluated at inference by replacing HTAF with the Heaviside function.

Table \ref{table_HTAF_predict} presents the average prediction accuracy over five trials for various architectures on \textsc{CIFAR-10}, \textsc{CIFAR-100}, \textsc{CUB-200}, and \textsc{Tiny-ImageNet} datasets.
Surprisingly, replacing the activation function in the last layer of the feature extractor with HTAF does not degrade prediction performance, even though the feature vectors are binary during inference.
In particular, on \textsc{CIFAR-100} and \textsc{CUB-200} datasets, we observe that applying HTAF even leads to improved prediction performance.
These results support that the implicit concepts estimated by ICBM are complete.
Note that CBMs (\citep{oikarinen2023label,yuksekgonul2022post,vandenhirtz2024stochastic}) typically suffer from degraded performance compared to conventional image models. 
Results of comparisons with other CBMs are provided in Appendix \ref{app:compare_CBM}.

\subsubsection{Evaluating class specificity of implicit concepts}
\label{sec:class-spec}

In this section, we evaluate whether implicit concepts are class-specific.
That is, we examine whether the activation values on each node are consistently either 0 or 1 for images belonging to the same class.
We use WideResNet as the backbone model and conduct experiments on CIFAR-10 dataset.

For images of the `Deer' class, we compute, for each node, the proportion of outputs equal to one.
As a baseline, we consider the original WideResNet and apply post hoc binarization to its activation values, converting them to zero or one.
The binarization threshold for each node is chosen to match the proportion of zeros and ones to those observed in ICBM.

Figure \ref{fig:hist-20} shows the histograms of these proportions for all methods on the `Deer' class.
Our method exhibits strong concentration near 0 and 1, whereas the baseline does not.
This indicates that the implicit concepts learned by ICBM are activated in a class-specific manner.
Additional results for other classes are in Appendix \ref{app_class_specific}.

\begin{figure}[h]
    \centering
    \includegraphics[width=0.7\linewidth]{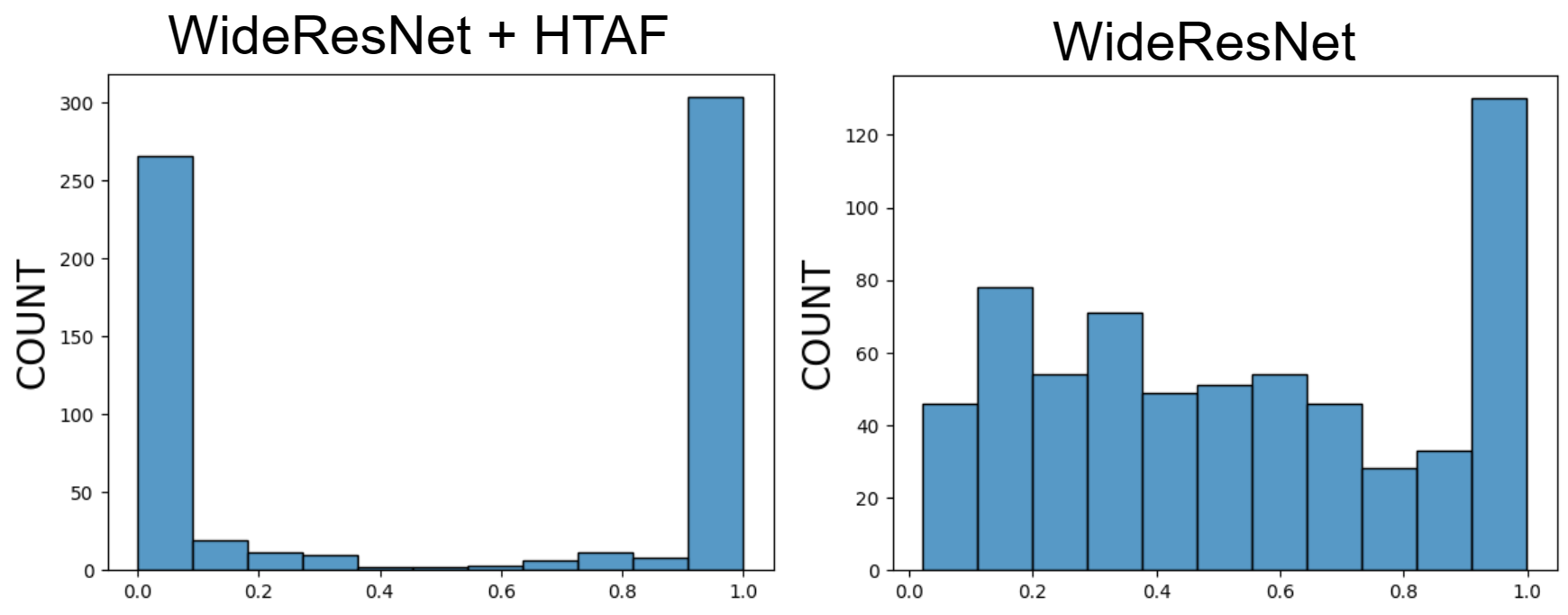}
    \vskip -0.15cm
    \caption{\textbf{Histograms of the proportions for the `Deer' class.}}
    \label{fig:hist-20}
\end{figure}

\subsubsection{Interpretation} 
\label{sec:interpret}
In this section, we illustrate how the learned implicit concepts can be interpreted using an LLM on \textsc{CUB-200} dataset with the ViT model.
For the \textit{Bewick's Wren} class, the top five important nodes based on the importance scores defined in Appendix \ref{app_def_score} are the 135th, 186th, 690th, 153rd, and 217th nodes. 
We analyze the semantic meaning of these nodes.

Figure \ref{fig:cub-node135} provides a visualization of the semantic properties captured by the 135th node in the final layer of the feature extractor.
The upper row shows representative images from the activated group, corresponding to samples with activation value one, while the lower row shows those from the unactivated group, corresponding to samples with activation value zero.

We identify the implicit concept associated with the 135th node using the following steps. 
First, we collect the class labels of images that activate the node and those that do not, forming two groups. 
We then ask an LLM to explain distinguishing features between these two groups. 
Specifically, we compile two class-name lists and ask the LLM to infer semantic meanings of the corresponding implicit concept using the following prompt. 
The LLM's responses are provided in Appendix \ref{app_LLM_responce}.

\begin{tcolorbox}[colback=gray!10, colframe=teal!60, title=Prompt]
\begin{scriptsize}
\begin{verbatim}
Task: Concept-based Visual Explanation

Group A: [Names of classes whose constituent images have an activation value of 1 at the node]
Group B: [Names of classes whose constituent images have an activation value of 0 at the node]

Identify the attribute that distinguishes Group A from Group B based on the two class lists. 
Provide the inferred concept and a brief justification.
\end{verbatim}
\end{scriptsize}
\end{tcolorbox}

\begin{figure*}
    \centering
    \begin{subfigure}[b]{0.68\textwidth}
        \centering
        \includegraphics[width=\textwidth]{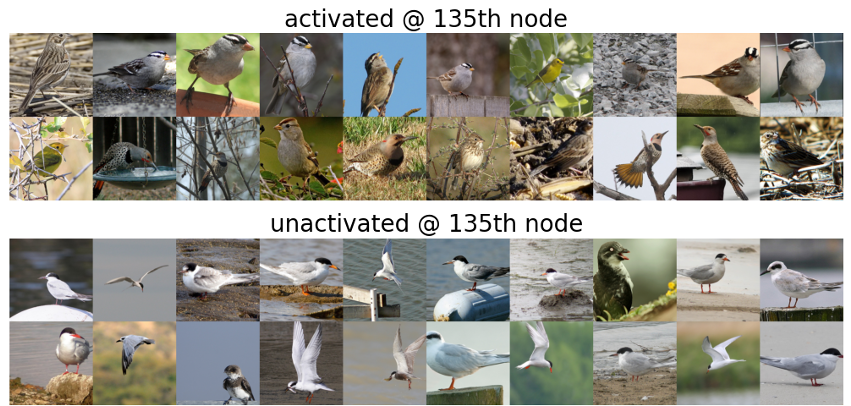}
        \caption{Activated and unactivated samples}
        \label{fig:cub-node135}
    \end{subfigure}
    \hfill
    \begin{subfigure}[b]{0.3\textwidth}
        \centering
        \includegraphics[width=\textwidth]{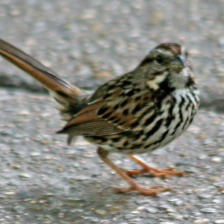}
        \caption{Misclassified instance}
        \label{fig:cub-miscls}
    \end{subfigure}
    \vspace{2mm}
    \caption{\textbf{Visual analysis of embedding space.} (a) The selected 20 images, showing activation value one (top) and zero (bottom) at the 135th node. (b) an image of \textit{Bewick Wren} which is misclassified.}
    \label{fig:cub-figs}
    \vskip -0.5cm
\end{figure*}

In response to the prompt, the LLM identifies the underlying concept of the node as \textit{`Ecomorphological Niche: Terrestrial Woodland Foragers vs. Aquatic, Marine, and Aerial Specialists'}.
This result is highly consistent with Figure \ref{fig:cub-node135}.
Notably, although the LLM is provided solely with textual class labels, the resulting explanation demonstrates a strong alignment with the visual characteristics of the actual images.
We repeat this process for the other 4 nodes and obtain the results shown in Table \ref{table_implicit_concept}.


\begin{table}[h]
\centering
\footnotesize
\caption{\textbf{Results of the estimated implicit concepts for top five important nodes}.}
\label{table_implicit_concept}
\begin{tabular}{cccc}
\toprule
Node index & Estimated implicit concept & Label & Importance score\\ \midrule
135 & Ecomorpohological Niche & 0: Aquatic / 1: Terrestrial & 0.1069\\ 
186 & Bill Morphology & 0: Stout or Conical / 1: Hooked or Spear-like  & 0.0941\\ 
690 & Habitats & 0: Dense Shrubland / 1: Open grassland & 0.0922\\ 
153 & Foraging mechanics & 0: Extraction / 1: Snatching   & 0.0879\\
217 & Visual Plumage Pigmentation & 0: Yellow \& Carotenoid / 1: Otherwise & 0.0840\\
\bottomrule
\end{tabular}%
\end{table}

We now investigate why a certain image is misclassified.
While most \textit{Bewick's Wren} images are correctly classified, the one shown in Figure \ref{fig:cub-miscls} is misclassified.
For correctly classified images, activation values of the five most important nodes (135, 186, 690, 153, and 217) are $(1, 1, 0, 0, 1)$, indicating that the implicit concepts associated with nodes 135, 186, and 217 are present, while those corresponding to nodes 690 and 153 are absent.
In contrast, for the misclassified image in Figure \ref{fig:cub-miscls}, ICBM predicts activation values for the important nodes as $(1, 0, 1, 1, 0)$, where nodes 135, 690, and 153 are active, and the others are not. 
Visually, the bird appears to exhibit features such as \textit{`stout and conical bill', `open grassland', and `yellow and carotenoid plumage'}.
As a result, ICBM misclassifies the image.

\section{Conclusion}

We proposed \emph{Heavy Tail Activation Function (HTAF)}, a novel activation function that stably approximates the Heaviside function under gradient-based optimization by preserving gradient magnitude.
We theoretically show that HTAF enables DNNs with Heaviside activation functions to learn effectively while alleviating the gradient vanishing problem in tail regions.

Based on the properties of HTAF, we further showed that HTAF provides an effective way to train DNNs with the Heaviside activation function using a standard gradient descent algorithm, without surrogate gradients, while significantly improving training stability and prediction performance.
Furthermore, we introduced \emph{Implicit Concept Bottleneck Models (ICBMs)}, which estimate implicit concepts using HTAF while maintaining prediction performance across various architectures and datasets.

Although tuning the parameters of HTAF can alleviate the gradient vanishing problem, it cannot eliminate it completely.
Addressing this limitation remains an important direction for future research.
In addition, an important direction for future work is to investigate principled methods for determining the number of concepts in ICBM, as well as the theoretical properties of DNNs employing HTAF.

\section*{Acknowledgments}
This research was supported by ERC grant A2B (grant agreement no. 101124751).

\bibliography{references}
\bibliographystyle{plain}

\newpage
\appendix
\onecolumn

\newpage

\section*{Impact statement}

This paper presents research aimed at advancing the field of Machine Learning.
While our work may have broader societal implications, we do not identify any specific consequences that require particular discussion here.

\section{Proofs}

\label{app_proofs}

\subsection{Proof of Proposition \ref{thm:heavi_approx}}

\paragraph{Case of $x \geq \epsilon$.}
Since
\begin{align*}
\sigma(z) \geq 1- \epsilon
\quad \Leftrightarrow \quad
z \geq \log \bigg({1-\epsilon \over \epsilon}\bigg)
\end{align*}
and
\begin{align*}
\alpha_{1}\psi(\alpha_{0} x) &= \alpha_{1}\bigg( {\exp(2\alpha_{0}x) - 1 \over \exp(2\alpha_{0}x) + 1} \bigg ),
\end{align*}
it is sufficient to find ranges of $\alpha_0$ and $\alpha_1$ satisfying the followings:
\begin{align}
\alpha_{1}\bigg( {\exp(2\alpha_{0}x) - 1 \over \exp(2\alpha_{0}x) + 1} \bigg ) \geq \log \bigg({1-\epsilon \over \epsilon}\bigg).
\label{eq:bdd_main}
\end{align}

\eqref{eq:bdd_main} is satisfied whenever $\alpha_0$ and $\alpha_1$ satisfy the followings:
\begin{align*}
\alpha_{1} > \log \bigg({1-\epsilon \over \epsilon}\bigg) \quad \text{and} \quad \alpha_{0} \geq {1\over 2\epsilon}\log \bigg( {\alpha_{1} + \log\big({1-\epsilon \over \epsilon}\big) \over  {\alpha_{1} - \log\big({1-\epsilon \over \epsilon}\big)}} \bigg).
\end{align*}
\paragraph{Case of $x \leq -\epsilon$.}
The proof is identical to that of the case $x \geq \epsilon$ and is therefore omitted.

\qed



\subsection{Proof of Theorem \ref{thm:main_theorem}}

\paragraph{Proof of (a).}

Without loss of generality, we only consider the case when $x \to \infty$, which implies $x > 0$ and $|x| = x$. \\
$\newline$
Direct calculation yields that
\begin{align*}
&{\partial \text{HTAF}(x|\alpha_{0},\alpha_{1}) \over \partial x} \\
&= \alpha_{1}\sigma(\alpha_{1}\psi(\alpha_{0}x))(1-\sigma(\alpha_{1}\psi(\alpha_{0}x))) {\partial \psi(\alpha_{0}x) \over \partial x} \\
&= \alpha_{1}\sigma(\alpha_{1}\psi(\alpha_{0}x))(1-\sigma(\alpha_{1}\psi(\alpha_{0}x)))  \bigg(\alpha_{0}(1-\{\psi(\alpha_{0}x)\}^{2})\bigg) \\
&= \alpha_{0}\alpha_{1}\sigma(\alpha_{1}\psi(\alpha_{0}x))(1-\sigma(\alpha_{1}\psi(\alpha_{0}x))) \frac{4\exp(-2\alpha_0 x)}{\big(1 + \exp(-2\alpha_0 x)\big)^2}. 
\end{align*}

Therefore, we have
\begin{align}
\frac{\partial HTAF(x|\alpha_0, \alpha_1)/\partial x}{\exp(-2\alpha_0|x|)} &= {\alpha_{0}\alpha_{1}\sigma(\alpha_{1}\psi(\alpha_{0}x))(1-\sigma(\alpha_{1}\psi(\alpha_{0}x))) \over (1 + \exp(-2\alpha_{0}x) )^{2}} \\
&\xrightarrow{} \alpha_{0}\alpha_{1}\sigma(\alpha_{1})(1-\sigma(\alpha_{1}))
\end{align}
as $x \xrightarrow{} \infty$.

\qed

\paragraph{Proof of (b).}

From proof of (a), we have
\begin{align*}
{\partial \text{HTAF}(x|\alpha_{0},\alpha_{1}) \over \partial x} \bigg |_{x=0} &= {\alpha_{0}\alpha_{1}\over 4}. 
\end{align*}
\qed

\newpage

\section{Theoretical properties of Deep Neural Networks with HTAF}
\label{app:minmax}
In this section, we study the minimax optimality of DNNs with HTAF for regression tasks. 
Here, DNNs with HTAF refers to DNNs that uses HTAF as its activation function.
We first introduce the notation used throughout this section.

\subsection{Notations}

\paragraph{Basic Notation.}
Let $\mathbf{m} = (m_{1}, \ldots, m_{d}) \in \mathbb{N}_{0}^{d}$ be a multi-index, where
$\mathbb{N}_{0} := \mathbb{N} \cup \{0\}$.
For a function $f : [0,1]^d \to \mathbb{R}$, we define the sup-norm as
$\|f\|_{\infty} := \sup_{\mathbf{x} \in [0,1]^d} |f(\mathbf{x})|$ and for $1 \leq p < \infty$, we write $\Vert f \Vert_{p,\mu} := (\int_{\mathcal{X}}f(\mathbf{x})^{p}\mu(d\mathbf{x}))^{1/p}$.
For a given vector $\mathbf{x} \in \mathbb{R}^{d}$, we denote 
the $L_1$ norm by
$|\mathbf{x}|_{1} := \sum_{1 \le j \le d} |x_j|$,
the infinity norm by
$|\mathbf{x}|_{\infty} := \max_{1 \le j \le d} |x_j|$ and the zero norm by $|\mathbf{x}|_{0} := \sum_{j=1}^{d}\mathbb{I}(x_{j}\neq 0)$.
For $\mathbf{m} \in \mathbb{N}_{0}^{d}$, the mixed partial derivative of order $\mathbf{m}$ is defined as 
$$
\partial^{\mathbf{m}} f := \frac{\partial^{|\mathbf{m}|_1} f}{\partial x_{1}^{m_{1}} \cdots \partial x_{d}^{m_{d}}}.
$$
Finally, let $\mathcal{C}_d^{m}$ denote the class of functions on $[0,1]^d$ that are $m$ times continuously differentiable.
Here, the partial derivatives of order $\mathbf{m}$ with $|\mathbf{m}| \leq m$ are continuous.
We denote DNNs using the ReLU activation function as deep ReLU networks.

\paragraph{Definition of the H\"older space.}
For a smoothness $\beta>0$, we write $\lfloor \beta \rfloor$ for the greatest integer less than $\beta$. 
The H\"older space $\mathcal{H}_d^{\beta}$ is defined as 
\begin{align}
\mathcal{H}_d^{\beta} := \big\{f \in \mathcal{C}_d^{\lfloor \beta \rfloor} : \Vert f \Vert_{\mathcal{H}_d^{\beta}} < \infty \big\}, 
\end{align}
where 
\begin{align}
\Vert f \Vert_{\mathcal{H}_d^{\beta}} := \sum_{\mathbf{m} \in \mathbb{N}_{0}^{d}: |\mathbf{m}| \leq \lfloor \beta \rfloor } \Vert  \partial^{\mathbf{m}}f \Vert_{\infty} + \sum_{\mathbf{m} \in \mathbb{N}_{0}^{d}: |\mathbf{m}| = \lfloor \beta \rfloor } \sup_{\mathbf{x}_{1},\mathbf{x}_{2} \in [0,1]^d, \mathbf{x}_{1} \neq \mathbf{x}_{2}} { |\partial^{\mathbf{m}}f(\mathbf{x}_{1}) - \partial^{\mathbf{m}}f(\mathbf{x}_{2})| \over |\mathbf{x}_{1}-\mathbf{x}_{2}|_{\infty}^{\beta - \lfloor \beta \rfloor} }.
\end{align}
For example, $\mathcal{H}_d^{1}$ corresponds to the space of bounded Lipschitz continuous functions.
For $R>0$, we defined the H\"older space $\mathcal{H}^{\beta}_d(R)$ as
\begin{align}
\mathcal{H}^{\beta}_d(R) := \big\{ f \in \mathcal{H}_d^{\beta}, \Vert f \Vert_{\mathcal{H}_d^{\beta}} \leq R \big\}.
\end{align}

\paragraph{Definition of DNNs with HTAF.}
Let $\text{HTAF}_{\alpha_0, \alpha_1} := \text{HTAF}(\cdot|\alpha_{0},\alpha_{1})$.
A DNN with HTAF and $L \in \mathbb{N}$ layers, $n_{\ell} \in \mathbb{N}$ many nodes at the $\ell$-th hidden layer for $\ell=1,...,L$, input dimension $n_{0}=d$, output dimension $n_{L+1}=1$ and $\text{HTAF}_{\alpha_0, \alpha_1}$ is express as
\begin{align}
f^{\text{HTAF}}_{\theta, \alpha_0, \alpha_1}(\mathbf{x}) := A_{L+1}\circ \text{HTAF}_{\alpha_0, \alpha_1} \circ A_{L} \circ \cdots \circ A_2 \circ \text{HTAF}_{\alpha_0, \alpha_1} \circ A_{1}(\mathbf{x}),
\end{align}
where 
$A_{\ell}:\mathbb{R}^{n_{\ell-1}}\xrightarrow{} \mathbb{R}^{n_{\ell}}$ is an affine map defined by $A_{\ell}(\mathbf{x})=\mathbf{W}_{\ell}\mathbf{x}+\mathbf{b}_{\ell}$.
Here, $\mathbf{W}_{\ell} \in \mathbb{R}^{n_{\ell}\times n_{\ell-1}}$ is a weight matrix and $\mathbf{b} \in \mathbb{R}^{n_{\ell}}$ is a bias vector.
Let $\theta$ be the set of all parameters including weight matrices and bias vectors, i.e., $\theta := \big( \mathbf{W}_{1},\mathbf{b}_{1},...,\mathbf{W}_{L+1},\mathbf{b}_{L+1}\big)$. 
For a given parameter $\theta$, let $L(\theta)$ denote the number of hidden layers (depth), and let $n_{\max}(\theta)$ denote the maximum number of hidden units per layer (width).
Note that the zero norm and infinity norm of $\theta$ is expressed as
\begin{align}
|\theta|_{0} &= \sum_{\ell=1}^{L+1}\big( |\text{vec}(\mathbf{W}_{\ell})|_{0} + |\mathbf{b}_{\ell}|_{0} \big),\\
|\theta|_{\infty} &= \max \bigg\{ \max_{\ell \in [L+1]}|\text{vec}(\mathbf{W}_{\ell})|_{\infty} , \max_{\ell \in [L+1]}|\mathbf{b}_{\ell}|_{\infty} \bigg\},
\end{align}
where $\mathrm{vec}(\mathbf{W}_{\ell})$ denotes the vectorization of $\mathbf{W}_{\ell}$ obtained by concatenating its column vectors.
Finally, we define the function spaces $\mathcal{F}_{\alpha_0, \alpha_1}(L,N)$ and $\mathcal{F}_{\alpha_0, \alpha_1}(L,N,S,B)$ of DNNs with HTAF by
\begin{align}
\mathcal{F}_{\alpha_0, \alpha_1}(L,N) &:= \big \{ f^{\text{HTAF}}_{\theta, \alpha_0, \alpha_1} : L(\theta) \leq L, n_{\max}(\theta) \leq N \}, \\
\mathcal{F}_{\alpha_0, \alpha_1}(L,N,S,B) &:= \big \{ f^{\text{HTAF}}_{\theta, \alpha_0, \alpha_1} : L(\theta) \leq L, n_{\max}(\theta) \leq N, |\theta|_{0} \leq S, |\theta|_{\infty} \leq B\}.
\end{align}
\newpage

\subsection{Expressivity}

In this section, we investigate the expressive power of DNNs with HTAF.
First, we show that DNNs with HTAF can approximate any deep ReLU networks with arbitrary precision, provided that the network is a constant factor larger.

\begin{theorem} \label{theorem_approx_relu}
    For any deep ReLU network $f^{\text{ReLU}}$ with $L$ hidden layers and $N$ hidden units per layer, and any $\alpha_0, \alpha_1, \epsilon > 0$, there exists a DNN with HTAF
    \[
    f^{\text{HTAF}}_{\theta, \alpha_0, \alpha_1} \in \mathcal{F}_{\alpha_0, \alpha_1}(2L,3N)
    \]
    satisfying
    \[
    \|f^{\text{HTAF}}_{\theta, \alpha_0, \alpha_1} - f^{\text{ReLU}}\|_{\infty} \le \epsilon.
    \]    
\end{theorem}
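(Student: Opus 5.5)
The plan is to replace each ReLU layer of $f^{\text{ReLU}}$ by two HTAF layers of width at most $3N$. Each block will approximate $\mathrm{ReLU}$ uniformly on a compact interval. The errors are then propagated through the network using Lipschitz bounds on the remaining layers.

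\textbf{Step 0: reduction to one unit.} Since the input domain $[0,1]^d$ is compact, every pre-activation of $f^{\text{ReLU}}$ lies in some interval $[-M,M]$. The map from layer $\ell$ onward is Lipschitz, with constant bounded by products of $\|\mathbf{W}_{k}\|$. A standard induction then shows that it suffices to construct, for every $\delta>0$, a two-layer HTAF block $r_\delta$ with at most $3$ units per layer and an affine output. This block must satisfy $\sup_{|x|\le M+1}|r_\delta(x)-\max(x,0)|\le\delta$. We apply it coordinatewise, absorb the output affine map of each block into the next affine map $A_{\ell+1}$, and choose $\delta$ so small that the accumulated error is at most $\epsilon$. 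Two remarks keep the induction clean:
\begin{itemize}
\item enlarging the interval by $1$ keeps the perturbed inputs inside the range where $r_\delta$ is accurate;
\item the perturbed pre-activations at each layer stay within $1$ of the true ones once $\delta$ is small.
\end{itemize}

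\textbf{Step 1: the two primitives.} Fix $\alpha_0,\alpha_1$ and write $\phi=\mathrm{HTAF}_{\alpha_0,\alpha_1}$. By Theorem~\ref{thm:main_theorem}(b), $\phi'(0)=\alpha_0\alpha_1/4>0$. The primitives are:
\begin{itemize}
\item \emph{Linear regime.} $\big(\phi(hx)-\phi(0)\big)/(h\phi'(0))\to x$ uniformly on compacts as $h\to0$, by Taylor's theorem since $\phi$ is $C^2$ with bounded derivatives.
\item \emph{Step regime.} $\phi(\lambda x)\to\mathbb{I}(x>0)$ uniformly on $\{|x|\ge\eta\}$ as $\lambda\to\infty$, by Proposition~\ref{thm:heavi_approx} after rescaling. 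Also $\phi\in(0,1)$ and $\phi$ is monotone, with $\phi(-\infty)=0$ and $\phi(\infty)=1$.
\end{itemize}

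\textbf{Step 2: the gating construction.}
\begin{itemize}
\item \emph{First layer.} Compute $v=\phi(hx)$, an affine proxy for $x$, and $s=\phi(\lambda x)$, a proxy for $\mathbb{I}(x>0)$.
\item \emph{Second layer, unit 1.} Compute the gated unit $w_1=\phi\big(h'(\tilde v-K(1-s))\big)$, where $\tilde v$ is the affine rescaling of $v$ approximating $x$. Take $h'$ small so that the linear regime holds, and $K$ large with $h'K\to\infty$. When $s\approx1$, $w_1$ is approximately affine in $x$. When $s\approx0$, $w_1\approx0$.
\item \emph{Second layer, unit 2.} Compute $w_2=\phi(h''(s-\tfrac12)+\dots)$ in its linear regime, so that it passes $s$ through affinely.
\item \emph{Output.} Set $r_\delta=a\,w_1+b\,w_2+c$, with $a,b,c$ chosen to make the output $\approx x$ when $s\approx1$ and $\approx0$ when $s\approx0$.
\end{itemize}
This uses at most $3$ units per layer, matching $\mathcal{F}_{\alpha_0,\alpha_1}(2L,3N)$. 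Away from a window $|x|<\eta$, the uniform bounds from Step 1 give error $O(h+h'+e^{-h'K}+\text{step error})$.

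\textbf{Main obstacle: the transition window $|x|<\eta$.} Inside this window $s$ is not binary and the output mixes the two regimes. There $\max(x,0)=O(\eta)$. The plan is to use monotonicity of $\phi$: for fixed $x$, $w_1$ is monotone in $s$ and $w_2$ is affine in $s$ up to small error, so the output should lie between the two regime values. Both regime values are $O(\eta+h+h')$ near $0$, so the window error would also be $O(\eta+h+h')$.

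If this interpolation argument fails, because the $(1-s)$-dependence of $w_1$ is far from affine, the fallback is to choose the parameters in order: fix $\eta$ first, then $\lambda\gg 1/\eta$, then $h'$, then $K$. We then bound the window contribution directly. In the window $|\tilde v|\lesssim\eta$, and $a=O(1/h')$ multiplies a quantity that varies by at most $h'\cdot O(\eta+K)$, which needs an extra $K$-uniform estimate. This bookkeeping is where I expect the real work of the proof to lie.
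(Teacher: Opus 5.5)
Your Step~0 reduction is the standard layer-by-layer argument and is fine. The Step~2 block, however, does not approximate $\max(x,0)$ uniformly. The failure in the transition window is structural, not bookkeeping.

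To get output $\approx x$ for $s\approx1$ you need $a\approx 1/(h'\phi'(0))$. To get output $\approx0$ for $s\approx0$, the affine-in-$s$ part of $b\,w_2$ must then have slope about $-(\phi(0)-\phi(-\infty))/(h'\phi'(0))$. (Note that $\phi(\pm\infty)=\sigma(\pm\alpha_1)$, not $1$ and $0$. So Proposition~\ref{thm:heavi_approx} cannot be reached by rescaling $x$; use the normalized gate $s=(\phi(\lambda x)-\sigma(-\alpha_1))/(\sigma(\alpha_1)-\sigma(-\alpha_1))$ instead, which is harmless.)

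Now take $x=0$. Then $\tilde v=0$ and $s=1/2$ exactly, so $w_1=\phi(-h'K/2)\to\phi(-\infty)$, because $h'K\to\infty$ (which the $s\approx0$ regime requires). Therefore
\[
r_\delta(0)=-\frac{\phi(0)-\phi(-\infty)}{2h'\phi'(0)}\bigl(1+o(1)\bigr)=-\frac{2\bigl(\tfrac12-\sigma(-\alpha_1)\bigr)}{\alpha_0\alpha_1h'}\bigl(1+o(1)\bigr),
\]
while the target value is $0$. This diverges as $h'\to0$, and $h'\to0$ is forced by the linear-regime error $O(h'M^2)$.

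Your interpolation heuristic breaks because $r_\delta$ is an increasing function of $s$ plus a decreasing affine function of slope $\asymp1/h'$. The increasing part is $a\,w_1$, which saturates once $1-s\gg1/(h'K)$. Such a sum need not lie between its endpoint values. No ordering of $\eta,\lambda,h',K$ rescues it. Separately, for $x\ge\eta$ the error contains a term $K(1-s)$, so $\lambda$ must be chosen after $K$, not before.

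The missing idea is that the gate has to enter multiplicatively, so that the block output is automatically $O(|x|)$ wherever $s$ is not binary. Within your framework, replace $w_2$ by the matched unit $w_2=\phi(-h'K(1-s))$ and output $a(w_1-w_2)$. By the mean value theorem, $a(w_1-w_2)=\tilde v\,\phi'(\xi)/\phi'(0)$ for some $\xi$. For HTAF, $\phi'\le\phi'(0)$, since both $1-\psi^2$ and $p(1-p)$ peak at $0$. Hence this output is:
\begin{itemize}
\item at most $|\tilde v|=O(\eta+h)$ in the window;
\item $\approx x$ for $x\ge\eta$, provided $h'K(1-s)$ and $h'M$ are small;
\item exponentially small for $x\le-\eta$.
\end{itemize}
Alternatively, compute $\tilde v\cdot s$ with a three-neuron mixed second difference exploiting $\phi''(x_0)\neq0$.

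The paper avoids building the block by hand. It observes that HTAF is $C^2$ and non-affine, hence $\phi''(x_0)\neq0$ for some $x_0$, and invokes Theorem~1 of \cite{zhang2024deep}, which directly provides the width-$3N$, depth-$2L$ emulation of ReLU networks.
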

\begin{proof}
Since $\text{HTAF}_{\alpha_0, \alpha_1}$ is twice differentiable and non-affine, there exists $x_0 \in \mathbb{R}$ such that $\text{HTAF}^{(2)}_{\alpha_0, \alpha_1}(x_0) \neq 0$.
We get the assertion by applying Theorem 1 of \cite{zhang2024deep}.  
\end{proof}

The approximation capabilities of deep ReLU networks have been widely studied across various settings (\cite{jiao2023deep, 10.1214/20-AOS2034, kong2025posterior, lu2021deep, petersen2018optimal, schmidt2019deep, 10.1214/19-AOS1875, wang2018exponential, yang2024optimal, YAROTSKY2017103}).
An implication of Theorem~\ref{theorem_approx_relu} is that the expressivity properties of deep ReLU networks carry over directly to DNNs with HTAF.

Interestingly, DHNs, despite being well approximated by DNNs with HTAF with sufficiently large $\alpha_0$ and $\alpha_1$, exhibit limited expressivity. 
For example, Theorem~1 of \cite{kong2025expressivity} shows that, for any non-constant target function, achieving an approximation error $ \leq \epsilon$ requires the DHN to have at least $\gtrsim 1/\epsilon$ neurons in the first hidden layer, regardless of how deep the model is.
On the other hand, combining Theorem~\ref{theorem_approx_relu} with Theorem~6 of \cite{wang2018exponential}, it follows that for any analytic function $f_0$, there exists a DNN with HTAF of depth $\lesssim \log^2(1/\epsilon)$ and constant width achieving an approximation error $\leq \epsilon$.
This shows that DNNs with HTAF not only help train DHNs but can also serve as a proxy for them without suffering from limited expressivity.

Indeed, $\alpha_0$ and $\alpha_1$ neither affect the depth and width required to achieve a given approximation error nor influence the scale of the parameters. 
The following theorem provides such an example and will be used to derive the statistical convergence rate later.

\begin{theorem} \label{theorem_approx_holder}
There exist positive constants $L_{0}, N_{0}, S_0$ and $B_0$ depending only on $(d, \beta, R)$, such that for any $f_{0} \in \mathcal{H}^{\beta}_d(R)$ and $\epsilon > 0$, there exists a DNN with HTAF
\[
f^{\text{HTAF}}_{\theta, \alpha_0, \alpha_1} \in \mathcal{F}_{\alpha_0, \alpha_1}\bigl(L_{0}\log(1/\epsilon),\; N_{0}\epsilon^{-d/\beta},\; S_{0}\epsilon^{-d/\beta},\; B_{0} \epsilon^{-4(d/\beta+1)}\bigr)
\]
satisfying
\[
\|f^{\text{HTAF}}_{\theta, \alpha_0, \alpha_1} - f_{0}\|_{\infty} \le \epsilon.
\]
\end{theorem}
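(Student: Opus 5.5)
We will reduce to known results for sigmoid-type networks by showing that an HTAF network can emulate a network with a fixed smooth activation, using a fixed rescaling that absorbs $\alpha_0,\alpha_1$ into the weights.

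\emph{Step 1: a normalized activation.} Let $\rho(t):=\text{HTAF}(t|\alpha_0,\alpha_1)$. By Theorem~\ref{thm:main_theorem}(b), $\rho'(0)=\alpha_0\alpha_1/4\neq 0$, and $\rho$ is analytic. Moreover $\rho$ is strictly increasing with limits $\sigma(\pm\alpha_1)$ at $\pm\infty$, so $\rho$ is a genuine sigmoidal function after an affine normalization. The difficulty is that any bound derived directly for $\rho$ could depend on $\alpha_0,\alpha_1$, and the theorem forbids this.

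\emph{Step 2: approximate with sufficiently large, fixed-scale networks.} We would follow a ReLU-type construction of Yarotsky/Schmidt-Hieber style. That construction gives depth $\lesssim\log(1/\epsilon)$ and width and sparsity $\lesssim\epsilon^{-d/\beta}$, built from local Taylor polynomials and a partition of unity. We then convert it to HTAF via Theorem~\ref{theorem_approx_relu}, which costs only constant factors in depth and width.

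The parameter bound $B_0\epsilon^{-4(d/\beta+1)}$ is where care is needed. The proof of Theorem~\ref{theorem_approx_relu}, following \cite{zhang2024deep}, emulates each ReLU and identity map by finite-difference quotients of $\rho$ around a point $x_0$ with $\rho''(x_0)\neq0$. Those quotients have weights of order $1/(h\rho''(x_0))$ or $1/(h\rho'(x_0))$, and these depend on $\alpha_0,\alpha_1$.

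To remove this dependence, we would instead use features built from ratios of the form
\[
\frac{\rho(\lambda t)-\rho(-\lambda t)}{\rho(\lambda)-\rho(-\lambda)},
\]
with scales $\lambda$ chosen polynomially in $1/\epsilon$ and uniformly over $(\alpha_0,\alpha_1)$. This construction only needs $\rho$ to be monotone, odd about $1/2$, and saturating. The normalizing denominator can be folded into the next layer's weights only if it is bounded below uniformly, and that is exactly what we must verify.

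\emph{Step 3: track the approximation errors.} Identity and multiplication approximations can use the saturation regime: a large input weight $\lambda$ makes $\rho$ behave like a Heaviside/step function with error $\le\epsilon$, by Proposition~\ref{thm:heavi_approx}. Combining steps of the form $\mathbb{I}(x\ge k/M)$ yields piecewise-constant approximants. Error propagation through the $\log(1/\epsilon)$ layers then determines the polynomial exponent $4(d/\beta+1)$.

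\emph{Main obstacle.} The hardest step is obtaining $\alpha$-uniform control of the weights. When $\alpha_1$ is small, $\rho$ has range width $\sigma(\alpha_1)-\sigma(-\alpha_1)\approx\alpha_1/2$, so normalization introduces weights of order $1/\alpha_1$. When $\alpha_0$ is small, the input scaling must be multiplied by $1/\alpha_0$.

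Our first attempt would be to absorb these factors into the \emph{definition} of the constants. Since no restriction is stated on $\alpha_0,\alpha_1$, genuine uniformity seems impossible as $\alpha_1\to0$. We would therefore read the statement as "$B_0$ may depend on a fixed pair $(\alpha_0,\alpha_1)$," or equivalently assume $\alpha_0\alpha_1$ and $\alpha_1$ lie in a compact range, consistent with the earlier remark that $\log(\alpha_0\alpha_1)$ is bounded. Under that reading, the step reduces to a direct bookkeeping of the Step~2 construction.
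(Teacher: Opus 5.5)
Your proposal misses the idea that makes the paper's proof work, and the route you sketch cannot reach the stated rates. You correctly saw that the $\alpha$-dependence enters through factors like $1/\rho''(x_0)$. The paper's fix is to balance exactly that factor and then rescale. It never passes through ReLU emulation or the saturated tails. Instead it applies Theorem~1 of \cite{ohn2019smooth} for locally quadratic activations, and notes that $(\alpha_0,\alpha_1)$ enter only through the three-unit square gadget $f_\phi(x)=\sum_{k=0}^{2}(-1)^{k-1}\frac{K^2}{\phi''(t)}\binom{2}{k}\phi(\frac{k}{K}x+t)$. The error of this gadget on $[0,1]$ is $\lesssim \sup|\phi'''|/(K|\phi''(t)|)$. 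Take $t=s_0=\psi^{-1}(1/\alpha_1)/\alpha_0$, so that $\psi(\alpha_0 s_0)=1/\alpha_1$ and $\text{HTAF}(s_0|\alpha_0,\alpha_1)=\sigma(1)$. This gives $|\phi''(s_0)|\gtrsim(\alpha_0\alpha_1)^2$, while $\sup|\phi'''|\lesssim(\alpha_0\alpha_1)^3$. Reparametrizing by $M=K/(\alpha_0\alpha_1)$, the gadget has error $\lesssim 1/M$, outer weights $O(M^2)$, and bias $s_0\lesssim 1/(\alpha_0\alpha_1)$, all with constants free of $\alpha_0,\alpha_1$. The rest of Ohn--Kim's construction then yields the depth, width, sparsity and $B_0\epsilon^{-4(d/\beta+1)}$ unchanged. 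The key is to work in the curvature region near the origin, where the scale $\alpha_0\alpha_1$ is absorbed into $K$, not in the saturation regime.

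Concretely, your Steps~2--3 do not close. Theorem~\ref{theorem_approx_relu} is qualitative: it bounds neither weights nor sparsity in terms of the accuracy. You also never quantify the finite-difference step needed in a depth-$\log(1/\epsilon)$ network, so no polynomial weight bound follows, even with $\alpha$-dependent constants. The saturation fallback fails outright. Sums of steps $\mathbb{I}(x\ge k/M)$ produce piecewise-constant approximants. These need of order $1/\epsilon$ jumps along any segment on which $f_0$ varies, hence of order $1/\epsilon$ first-layer units (cf.\ the lower bound of \cite{kong2025expressivity} quoted in the paper). That exceeds $N_0\epsilon^{-d/\beta}$ whenever $\beta>d$. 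Moreover, the efficient multiplication on which the $\epsilon^{-d/\beta}$ rate relies needs curvature, which is negligible in the tails. Your remark that uniformity is impossible as $\alpha_1\to0$ is correct: the output oscillation is at most $NB(\sigma(\alpha_1)-\sigma(-\alpha_1))$. The paper's argument indeed assumes $\alpha_0,\alpha_1$ sufficiently large. But letting the constants depend on $(\alpha_0,\alpha_1)$, or restricting to a compact range, proves a strictly weaker statement. It drops the intended content, namely uniformity over the unbounded regime of large $\alpha_0\alpha_1$ used to approximate the Heaviside function. Under your weakened reading, the direct argument would simply be that HTAF is locally quadratic, so Ohn--Kim's Theorem~1 applies as is.
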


\begin{proof}
We apply Theorem~1 of \cite{ohn2019smooth}, keeping track of the dependence of the implicit constants on $\alpha_0$ and $\alpha_1$. 
It suffices to consider the construction of the square function $x \mapsto x^2$ (Lemma~A3.(a) of \cite{ohn2019smooth}), as the remaining part of the proof depends only on this structure.

Suppose that an activation function $\phi$ is three times continuously differentiable. 
Additionally, assume that there exists an interval $(a,b)$ on which $\phi$ has bounded derivatives up to order three, and that there exists $t \in (a,b)$ such that both $\phi'(t)$ and $\phi''(t)$ are non-zero.
For sufficiently large $K$ with $\min(t-a, b-t) > 2/K$, 
\cite{ohn2019smooth} considered a deep neural network with $1$ hidden layer, $3$ hidden neurons and the activation function $\phi$, 
\begin{align}
    f_{\phi}(x) := \sum_{k=0}^2(-1)^{k-1} \frac{K^2}{\phi^{\prime \prime}(t)}\binom{2}{k} \phi\left(\frac{k}{K} x+t\right). \label{tmp_2}
\end{align}
Then, it holds for some constant $c>0$,
\begin{align}
    \sup_{x \in [0,1]} \big|f_{\phi}(x) - x^2\big| \leq c \, \frac{\sup_{z \in (a,b)} |\phi'''(z)|}{K |\phi''(t)|}.
    \label{tmp_1}
\end{align} 

For $s \in [0,1]$,
define
\[
u := \psi(\alpha_{0} s), \qquad p := \sigma(\alpha_1 u) = \text{HTAF}(s|\alpha_{0},\alpha_{1}).
\]
Then, we have 
\begin{align*}
    \text{HTAF}^{(1)}(s|\alpha_{0},\alpha_{1})
    &= \alpha_0 \alpha_1 (1-u^2) p (1-p), \\ \text{HTAF}^{(2)}(s|\alpha_{0},\alpha_{1})
    &= \alpha_0^2 \alpha_1 (1-u^2) p (1-p) [-2u + \alpha_1 (1-2p)(1-u^2)]\\
    \text{HTAF}^{(3)}(s|\alpha_{0},\alpha_{1})
    &= \alpha_0^3 \alpha_1 (1-u^2) p (1-p) \Big[
6u^2 - 2
- 6\alpha_1 u (1-2p)(1-u^2)\\
& \hspace{4.5cm}+ \alpha_1^2 (1-6p+6p^2)(1-u^2)^2
\Big].
\end{align*}
In particular, for any sufficiently large $\alpha_0$ and $\alpha_1$, if we put $s = s_0 := \psi^{-1}(\alpha_1^{-1})/\alpha_0$, then $u = 1/\alpha_1$ and $p = (1+e^{-1})^{-1}$. 
Hence,
\[
\text{HTAF}^{(2)}\Big(\frac{\psi^{-1}(\alpha_1^{-1})}{\alpha_0} \Big|\alpha_{0},\alpha_{1}\Big)
    \gtrsim \alpha_0^2 \alpha_1^2.
\]
On the other hand, as both $\psi(\cdot)$ and $\sigma$ are bounded function, we have
$\text{HTAF}^{(3)}(x|\alpha_{0},\alpha_{1}) \lesssim \alpha_0^3 \alpha_1^3$.

We plug in $\phi = \text{HTAF}_{\alpha_{0},\alpha_{1}}$ and 
$t=s_0$ into \eqref{tmp_2} and \eqref{tmp_1}. 
Then, for
\[
f^{\text{HTAF}}_{\xi, \alpha_0, \alpha_1} := \sum_{k=0}^2(-1)^{k-1} \frac{K^2}{\phi^{\prime \prime}(t)}\binom{2}{k} \phi\left(\frac{k}{K} x+s_0\right).
\]
we get
\begin{align*}
    \sup_{x \in [0,1]} \big|f^{\text{HTAF}}_{\xi, \alpha_0, \alpha_1} - x^2\big| \lesssim \frac{\alpha_0 \alpha_1}{K}.
\end{align*} 

Observe that the magnitudes of the parameters in $f^{\text{HTAF}}_{\xi, \alpha_0, \alpha_1}$ are bounded by $O(K^2/(\alpha_0 \alpha_1)^2)$. 
Recall that $K$ can be chosen arbitrary large.
By defining $M = K/(\alpha_0 \alpha_1)$, we obtain that for any sufficiently large $M$, there exists $f^{\text{HTAF}}_{\xi, \alpha_0, \alpha_1}$ with $L(\xi) =1$, $N(\xi) = 3$ and $|\xi|_{\infty} = O(M^2)$ such that  
\begin{align*}
    \sup_{x \in [0,1]} \big|f^{\text{HTAF}}_{\xi, \alpha_0, \alpha_1} - x^2\big| \lesssim \frac{1}{M},
\end{align*} 
where the implicit constants do not depend on $\alpha_0$ or $\alpha_1$. 
Hence, we recover the same conclusion as in Lemma~A3.(a) of \cite{ohn2019smooth}, and consequently the same assertion as in Theorem~1 of \cite{ohn2019smooth}.

\end{proof}

\subsection{Statistical convergence rate}

In this section, we investigate statistical convergence rates of DNNs with HTAF in a nonparametric regression setting.
Building upon the approximation results established in the previous section, we show that DNN estimators with HTAF achieves the near-minimax optimal convergence rate over the H\"older class, under mild assumptions on $\alpha_0$ and $\alpha_1$.
This result demonstrates that HTAF preserves not only the expressive power but also the statistical optimality of deep neural networks.

For a given $f_{0}$, we consider a nonparametric regression model where the input variable $\mathbf{X}$ follows a distribution $\mathbb{P}_{\mathbf{X}}$, and the response variable satisfies
\begin{align}
Y|\mathbf{X}=\mathbf{x} \sim N(f_{0}(\mathbf{x}),1).
\end{align}
Let $(\mathbf{X}_{1},Y_{1}),...,(\mathbf{X}_{n},Y_{n})$ be independent copies of $(\mathbf{X},Y)$.
Theorem \ref{theorem_statistical_convergence} shows that the DNN estimators with HTAF achieves the minimax optimal rate, and that the dependence on $\alpha_{0}$ and $\alpha_{1}$ appears only through logarithmic factors.

\begin{theorem} \label{theorem_statistical_convergence}
For any $\kappa, R, \alpha_0, \alpha_1 >0$, there exist positive constants $L_{0}, N_{0}, S_{0}$, and $B_{0}$ such that the DNN with HTAF estimator
\[
\hat{f}_{n} \in \arg\min_{f \in \mathcal{F}_{n}} \sum_{i=1}^{n}(Y_{i}-f(\mathbf{X}_{i}))^{2}
\]
with
\[
\mathcal{F}_{n}
:=
\left\{
f^{\text{HTAF}}_{\xi, \alpha_0, \alpha_1} \in
\mathcal{F}\bigl(L_{0}\log n,\,
N_{0}n^{\frac{d}{2\beta + d}},\,
S_{0}n^{\frac{d}{2\beta + d}}\log n,\,
B_{0}n^{\kappa}\bigr)
:
\|f^{\text{HTAF}}_{\xi, \alpha_0, \alpha_1}\|_{\infty} \leq 2R
\right\}
\]
satisfies
\[
\sup_{f_{0} \in \mathcal{H}^{\beta}_d(R)}
\mathbb{E}_{n}\!\left[
\| \hat{f}_{n} - f_{0} \|_{2,\mathbb{P}_{\mathbf{X}}}^{2}
\right]
\leq
C n^{-\frac{2\beta}{2\beta + d}} \Big(\log^3 n + \log^2 n \log(\alpha_0 \alpha_1)\Big)
\]
for some constant $C>0$.
\end{theorem}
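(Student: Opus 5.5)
We follow the standard oracle-inequality route for least-squares estimators over sieves (as in Schmidt-Hieber and Ohn--Kim): bound the risk by approximation error plus a complexity term governed by the covering entropy of $\mathcal{F}_n$. The oracle inequality we would invoke (e.g. Lemma 4 of Schmidt-Hieber, or Theorem 2 of \cite{ohn2019smooth}) states that for a class with $\|f\|_\infty\le 2R$ and Gaussian noise,
\begin{align*}
\mathbb{E}_n\big[\|\hat f_n-f_0\|_{2,\mathbb{P}_{\mathbf{X}}}^2\big]\lesssim \inf_{f\in\mathcal{F}_n}\|f-f_0\|_\infty^2+\frac{\log \mathcal{N}(1/n,\mathcal{F}_n,\|\cdot\|_\infty)+1}{n}.
\end{align*}
We then treat the two terms separately.

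\textbf{Approximation term.} Apply Theorem \ref{theorem_approx_holder} with $\epsilon=n^{-\beta/(2\beta+d)}$. This gives a network with depth $L_0\log(1/\epsilon)\lesssim\log n$, width and sparsity $\lesssim n^{d/(2\beta+d)}$, and weights bounded by $B_0\epsilon^{-4(d/\beta+1)}=B_0 n^{4(d+\beta)/(2\beta+d)}$. We choose $\kappa$ (or enlarge $B_0n^\kappa$ accordingly) so that this network lies in $\mathcal{F}_n$. Its sup-norm is at most $R+\epsilon\le 2R$, so the truncation constraint is met. Because the constants of Theorem \ref{theorem_approx_holder} do not depend on $\alpha_0,\alpha_1$, the approximation term is $\lesssim n^{-2\beta/(2\beta+d)}$ uniformly in $f_0$.

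\textbf{Entropy term.} This is where $\alpha_0\alpha_1$ enters, and it is the main step. We would redo the parameter-perturbation covering argument. The key fact is that $\text{HTAF}_{\alpha_0,\alpha_1}$ is bounded in $[0,1]$ and Lipschitz with constant $\sup|\text{HTAF}'|=\alpha_0\alpha_1/4$, attained at $0$ by Theorem \ref{thm:main_theorem}(b). Since $(1-u^2)\le1$ and $p(1-p)\le1/4$, the sup is indeed at the origin.

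We would show by induction over layers that changing each parameter by at most $\delta$ changes the output by at most
\[
\delta\, L\,\big((N+1)B\big)^{L+1}\big(\alpha_0\alpha_1/4\vee1\big)^{L}.
\]
For the first layer, inputs lie in $[0,1]^d$. Each later layer has inputs in $[0,1]$ thanks to boundedness of HTAF, which keeps the recursion clean. Counting sparse supports then gives
\[
\log\mathcal{N}\lesssim S(L+1)\log\big(n L (N+1)B\,(\alpha_0\alpha_1\vee 1)\big)\lesssim S\,L\big(\log n+\log(\alpha_0\alpha_1)\big).
\]
With $L\asymp\log n$, $S\asymp n^{d/(2\beta+d)}\log n$, and $\log B\asymp\kappa\log n$, this becomes
\[
n^{d/(2\beta+d)}\big(\log^3 n+\log^2 n\,\log(\alpha_0\alpha_1)\big).
\]
Dividing by $n$ gives exactly the claimed rate, and adding the approximation term finishes the proof. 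The $\log(\alpha_0\alpha_1)$ factor is understood as $\log(\alpha_0\alpha_1\vee e)$, which is absorbed into $C$.

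The obstacle is getting the Lipschitz-propagation constant right: the activation's Lipschitz constant $\alpha_0\alpha_1/4$ must enter only as an $L$-th power inside the log, not multiplicatively elsewhere. Boundedness of the hidden outputs in $[0,1]$ is what prevents the magnitudes from compounding.
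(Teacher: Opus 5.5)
Your route is the paper's: it combines the oracle inequality of Lemma~4 of \cite{10.1214/19-AOS1875}, Theorem~\ref{theorem_approx_holder} at $\epsilon=n^{-\beta/(2\beta+d)}$, and the Lipschitz-based entropy bound of Proposition~1 of \cite{ohn2019smooth}, which the paper applies with Lipschitz constant $\alpha_0\alpha_1$. You re-derive that entropy bound by the same layerwise perturbation argument, with the sharp constant $\alpha_0\alpha_1/4$. This makes no difference, since the constant only enters through $\log(\alpha_0\alpha_1)$. Your perturbation recursion, the resulting $S L(\log n+\log(\alpha_0\alpha_1))$ entropy, and the $\|\cdot\|_\infty\le 2R$ check are fine.

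The one step that does not prove the statement as written is ``we choose $\kappa$ (or enlarge $B_0n^\kappa$ accordingly)''.
\begin{itemize}
\item The theorem quantifies over every $\kappa>0$, so $\kappa$ is given, not chosen.
\item Enlarging the constant $B_0$ cannot absorb a polynomial factor. With $\epsilon=n^{-\beta/(2\beta+d)}$, Theorem~\ref{theorem_approx_holder} only supplies weights of size $B_0\,n^{4(d+\beta)/(2\beta+d)}$, whose exponent always exceeds $2$.
\item You cannot take $\epsilon$ larger either, because the approximation error must stay of order $n^{-\beta/(2\beta+d)}$ up to logarithms.
\end{itemize}
So your argument establishes the rate only for $\kappa\ge 4(d+\beta)/(2\beta+d)$.

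This is not something you missed relative to the paper. Its proof invokes the same approximation theorem without comment and has exactly the same limitation, so the defect lies in the statement's quantifier on $\kappa$. Covering arbitrarily small $\kappa$ would need an ingredient neither argument provides. That ingredient is an approximation result in which an HTAF network achieving error $\epsilon$ has weights bounded by $\epsilon^{-c}$ for arbitrarily small $c>0$, at only constant-factor cost in depth, width and sparsity. One possible way is to split large weights across a constant number of extra near-linear HTAF layers.
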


\begin{proof}
Since $\text{HTAF}_{\alpha_0, \alpha_1}$ is $(\alpha_0 \alpha_1)$-Lipschitz continuous, Proposition~1 of \cite{ohn2019smooth} yields
\begin{align}
    \log \mathcal{N}\left(\frac{1}{n}, \mathcal{F}_{n}, \|\cdot\|_{\infty}\right) 
    \lesssim  n^{\frac{d}{2\beta + d}}\Big(\log^3 n + \log^2 n \log(\alpha_0 \alpha_1)\Big). \label{tmp_3}
\end{align}
We obtain the desired assertion by applying Lemma~4 of \cite{10.1214/19-AOS1875} together with Theorem \ref{theorem_approx_holder} and \eqref{tmp_3}.
\end{proof}

\newpage

\section{Approximation of a DNN with the Heaviside activation function by a DNN with HTAF activation}

\label{app_dhn_approx}

In this section, we prove that a DHN can be approximated by a DNN with HTAF activation as long as
the Heaviside function is well approximated by HTAF.
For notational simplicity, we only consider a DHN with one hidden layer. The proof could be extended for multi-layer DNNs without much  
difficulty.

Let $\sigma_{H}(x)$ be the Heaviside function (i.e. $\sigma_H(x)=\mathbb{I}(x\ge 0)$).
Let $f:\mathbb{R}^d \rightarrow \mathbb{R}$ be a DNN with the Heaviside activation function given as
$$f(\mathbf{x})=W_1^\top \sigma_{H} (\mathbf{W}_0 \mathbf{x}+\mathbf{b}),$$
where $W_1$ is a $n_1$ dimensional vector, $\mathbf{W}$ is an $n_1\times d$ dimensional matrix and
$\mathbf{b}$ is an $n_1$ dimensional vector. 
Let $g(\mathbf{x})$ be the DNN obtained by replacing the Heaviside activation in $f$ by HTAF activation. 
That is
$$g(\mathbf{x})=W_1^\top \text{HTAF}(\mathbf{W}_0 \mathbf{x}+\mathbf{b}|\alpha_0\alpha_1).$$
Let $h_j(\mathbf{x})$ be the $j$ entry of $\mathbf{W}_0 \mathbf{x}+\mathbf{b}$ for $j =1,...,n_{1}$.

\begin{theorem}
\label{app_dhn_dnnhtaf}
For given random vector $\mathbf{X}\sim \mathbb{P}_{\mathbf{X}},$
we assume the following regularity conditions:
\begin{itemize}
 \item[(C)] For $\epsilon \in (0,1/2),$ there exists a constant $C>0$ such that
$$\mathbb{P}_{\mathbf{X}} \{  |h_j(\mathbf{X})| \in (-\epsilon,\epsilon)\}\le C\epsilon$$
for all $j\in [n_1]$.
\end{itemize}
Then, the following holds.
$$\mathbb{E}_{\mathbf{X}}[|f(\mathbf{X})-g(\mathbf{X})|] \le  n_1 \|W_1\|_\infty (1+C)\epsilon.$$ 
\end{theorem}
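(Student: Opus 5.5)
The argument reduces the network error to a sum of per-neuron errors and then splits each neuron's error according to whether its pre-activation $h_j(\mathbf{X})$ is near zero. Throughout, we take $\alpha_0,\alpha_1$ to satisfy the conditions of Proposition~\ref{thm:heavi_approx} for the given $\epsilon$; this is implicit in the statement, since otherwise HTAF need not approximate $\sigma_H$.

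First, by linearity and the triangle inequality,
\begin{align*}
|f(\mathbf{x})-g(\mathbf{x})| \le \sum_{j=1}^{n_1} |W_{1,j}|\,\bigl|\sigma_H(h_j(\mathbf{x}))-\text{HTAF}(h_j(\mathbf{x})|\alpha_0,\alpha_1)\bigr| \le \|W_1\|_\infty \sum_{j=1}^{n_1} \Delta_j(\mathbf{x}),
\end{align*}
where $\Delta_j(\mathbf{x}) := |\sigma_H(h_j(\mathbf{x}))-\text{HTAF}(h_j(\mathbf{x})|\alpha_0,\alpha_1)|$. Taking expectations, it suffices to show $\mathbb{E}_{\mathbf{X}}[\Delta_j(\mathbf{X})] \le (1+C)\epsilon$ for each $j\in[n_1]$.

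For fixed $j$, we split on the event $A_j := \{|h_j(\mathbf{X})| < \epsilon\}$ and its complement.

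On $A_j^c$ we have either $h_j \ge \epsilon$ or $h_j \le -\epsilon$.
\begin{itemize}
\item If $h_j \ge \epsilon$, then $\sigma_H(h_j)=1$, and Proposition~\ref{thm:heavi_approx} gives $\Delta_j = |1-\text{HTAF}(h_j)| \le \epsilon$.
\item If $h_j \le -\epsilon$, then $\sigma_H(h_j)=0$, and the second half of \eqref{eq:approx-HTAF} gives $\Delta_j \le \epsilon$.
\end{itemize}
Hence $\mathbb{E}[\Delta_j \mathbb{I}(A_j^c)] \le \epsilon$.

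On $A_j$ there is no useful pointwise control, so we use a crude bound. Both $\sigma_H$ and HTAF take values in $[0,1]$, so $\Delta_j \le 1$. Condition (C) then gives $\mathbb{E}[\Delta_j \mathbb{I}(A_j)] \le \mathbb{P}(A_j) \le C\epsilon$.

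Summing the two pieces gives $\mathbb{E}[\Delta_j(\mathbf{X})] \le (1+C)\epsilon$, and summing over $j$ yields $\mathbb{E}_{\mathbf{X}}[|f(\mathbf{X})-g(\mathbf{X})|] \le n_1\|W_1\|_\infty(1+C)\epsilon$.

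The argument is essentially routine. The only delicate point is the neighborhood $|h_j|<\epsilon$, where HTAF cannot approximate the discontinuity uniformly. This is exactly what the anti-concentration assumption (C) handles, by charging that region its probability mass times the trivial bound $1$ on the error.
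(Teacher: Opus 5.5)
Your proof is correct and matches the paper's argument: reduce to per-neuron errors with $|W_{1,j}|\le\|W_1\|_\infty$, then bound each $\mathbb{E}_{\mathbf{X}}[\Delta_j(\mathbf{X})]$ by $(1+C)\epsilon$ using Proposition~\ref{thm:heavi_approx} on $\{|h_j|\ge\epsilon\}$ and Condition (C) together with $\Delta_j\le 1$ on $\{|h_j|<\epsilon\}$. You spell out the split that the paper leaves implicit, and you rightly make explicit the assumption on $\alpha_0,\alpha_1$, which the paper states at the start of its proof.
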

\begin{proof}
Suppose that $\alpha_0$ and $\alpha_1$ satisfies the conditions in Proposition \ref{thm:heavi_approx}.
Then, we have
$$|f(\mathbf{x})-g(\mathbf{x})|\le  \|W_1\|_{\infty} \sum_{j=1}^{n_1} \left|\sigma_H(h_j(\mathbf{x}))-\text{HTAF}(h_j(\mathbf{x})|\alpha_0,\alpha_1)\right|.$$
Condition (C) and Proposition \ref{thm:heavi_approx} imply that
$$\mathbb{E}_{\mathbf{X}}[\left|\sigma_H(h_j(\mathbf{x}))-\text{HTAF}(h_j(\mathbf{x})|\alpha_0,\alpha_1)\right|]
\le (1+C)\epsilon,$$ and thus we conclude that
$$\mathbb{E}_{\mathbf{X}} [|f(\mathbf{X})-g(\mathbf{X})|] \le  n_1 \|W_1\|_\infty (1+C)\epsilon.$$ 
\end{proof}

Note that Condition (C) is mild and can be satisfied under reasonable assumptions on $\mathbb{P}_{\mathbf{X}}$, $\mathbf{W}_{0}$, and $\mathbf{b}$.


\newpage

\section{Details of the experiments}

\label{app:detials_all_exp}

\subsection{Data description}

\label{app:data_descp}

\begin{table}[h]
\centering
\footnotesize
\caption{\textbf{Descriptions for datasets}.}
\label{table:data_descp}
\begin{tabular}{ccccc}
\toprule
Dataset & Data size  & \# of feature & Type & Task \\ \midrule
\textsc{CIFAR-10} (\citep{krizhevsky2009learning}) & 60,000 & -- & Image & Classification \\ 
\textsc{CIFAR-100} (\citep{krizhevsky2009learning}) & 60,000 & -- & Image & Classification \\ 
\textsc{CUB-200} (\citep{WahCUB_200_2011}) & 11,788 & -- & Image & Classification \\ 
\textsc{Tiny-ImageNet} (\citep{le2015tiny})& 100,000 & -- & Image & Classification \\ \midrule
\textsc{Wine} (\citep{wine_quality})& 4,898 & 11 &   Tabular & Regression \\ 
\textsc{Abalone} (\citep{abalone}) & 4,178 & 8 & Tabular & Regression \\ 
\textsc{Churn} (\citep{churn}) & 7,043 & 20 &   Tabular & Classification \\ 
\textsc{FICO} (\citep{fico}) & 10,459 & 23 &   Tabular & Classification \\ \midrule
\textsc{Winogrande-small} (\citep{sakaguchi2021winogrande})& 1,907 & -- & Language & Classification  \\ 
\textsc{Winogrande-medium} (\citep{sakaguchi2021winogrande})& 3,823 & -- & Language & Classification  \\ 
\textsc{OpenBookQA} (\cite{mihaylov2018can}) & 5,456 & -- & Language & Classification \\ 
\bottomrule
\end{tabular}
\end{table}

Table \ref{table:data_descp} presents the descriptions of the datasets used in our experiments.

\subsection{Training details}

We experiment with both Adam and SGD optimizers and report the best-performing configuration.
The learning rate is selected via hyperparameter tuning from the candidate set $\{10^{-1}, 10^{-2}, 10^{-3}, 10^{-4}\}$.
The batch size is set to the largest value that can be accommodated within the available computational resources.
We standardize the image data independently for each RGB channel.
Additionally, all images were distributed as train/test sets. 
The train set was further split into train/validation datasets: the train set was used for model training, the validation set for hyperparameter tuning, and the test set for evaluating the final prediction performance.
For tabular data, we use min-max scaling for normalization.

\subsection{Computational resources}

All experiments were conducted using RTX 3090 and RTX 4090 GPUs with 24 GB VRAM, as well as RTX PRO 6000 GPUs with 96 GB VRAM.

\subsection{Experimental Details for Deep Neural Networks with the Heaviside activation function}

\label{app:details_heavi}

\paragraph{Details for Spiking Neural Networks experiments.}
\label{app:details_snn}

We use Spike-Element-Wise (SEW) ResNet as the backbone SNN model and implement it with minor modifications based on the official code available at \url{https://github.com/fangwei123456/Spike-Element-Wise-ResNet}.
For STE method, we consider multiple surrogate gradients, including the sigmoid, hyperbolic tangent, and rectangular functions, and report the best prediction performance among them. These surrogate gradients are defined as follows:
\begin{equation}
g_{\text{sigmoid}}(x) = \sigma(x)\left(1 - \sigma(x)\right),\quad g_{\text{tanh}}(x) = 1 - \tanh^2(x),\quad g_{\text{rect}}(x) =
\begin{cases}
1 & \text{if } |x| < 1 \\
0 & \text{otherwise}
\end{cases}
\end{equation}

For the parameters $\alpha_{0}$ and $\alpha_{1}$ in (\ref{eq:HTAF}), we select the optimal values from the candidate set $\{(0.1,220) ,(0.5, 44), (1, 22), (2, 11), (3, 7.3), (5, 4.4)\}$.
For $\beta_{0}$ in Scaled Sigmoid, we select the optimal value from the candidate set $\{10, 22, 50, 100\}$.

\paragraph{Details for Binary Neural Networks experiments.}

We use Bi-Real Net (\cite{liu2018bi}) as the backbone BNN model, and implement it with minor modifications based on the official code provided in \url{https://github.com/liuzechun/Bi-Real-net}.
For STE method, we use the same surrogate gradient as in \cite{liu2018bi}.
For $\alpha_{0}, $$\alpha_{1}$ and $\beta_{0}$, we select the optimal values from the candidate set in the SNN experiments.

\paragraph{Details for Deep Heaviside neural Networks experiments.}

We use WideResNet (\cite{zagoruyko2016wide}) as the backbone image model and implement it with minor modifications based on the official code available at \url{https://github.com/google-research/augmix}.
For STE method, we use the same surrogate gradients as those considered in the SNN experiments.
For $\alpha_{0}$, $\alpha_{1}$ and $\beta_{0}$, we choose optimal parameters from the candidate set in the experiment of SNNs.

\subsection{Details for ICBM experiments}

In this section, we describe the details of experiments for image datasets.
For $\alpha_{0}$, $\alpha_{1}$ and $\beta_{0}$, we choose optimal parameters from the candidate set in Appendix \ref{app:details_snn}.
We use GPT-5 (\cite{singh2025openai}) as a large language model to interpret the implicit concepts estimated by ICBM.
The details of architectures for the backbone image models are summarized below.

\paragraph{ResNet.} 
We use ResNet-18 architecture for all experiments, with different pretrained weights depending on the dataset. Specifically, for \textsc{CIFAR-10}, we use a model from \url{https://huggingface.co/edadaltocg/ResNet18_cifar10}; for \textsc{CIFAR-100} and \textsc{Tiny-ImageNet}, we use models provided by \cite{torchvision2016}; and for \textsc{CUB-200}, we use a model from \cite{you2019torchcv}.

\paragraph{WideResNet.}
For \textsc{CIFAR-10} and \textsc{CIFAR-100}, we use WideResNet models (WRN-40-6 and WRN-28-6), which are trained from scratch using the code provided in \cite{hendrycks2019augmix}.
Here, WRN-40-6 denotes a WideResNet with depth 40 and widen factor 6, and WRN-28-6 is defined similarly.
For \textsc{CUB-200} and \textsc{Tiny-ImageNet}, we use an ImageNet-pretrained WideResNet-50 model from \cite{torchvision2016} and fine-tune it. 

\paragraph{ViT.}
For \textsc{CIFAR-10}, \textsc{CIFAR-100}, \textsc{CUB-200}, and \textsc{Tiny-ImageNet}, we fine-tune a ViT-Base model with a patch size of $32 \times 32$, initialized from the pretrained model provided at \url{https://huggingface.co/google/vit-base-patch32-224-in21k}.

\newpage


\section{Comparison to ANN-to-SNN method}

\label{app:ann-to-snn}

\begin{table*}[h]
\centering
\footnotesize
\caption{\textbf{Results of the prediction accuracy and inference time comparison between ANN-to-SNN conversion and our method.}}
\vskip -0.2cm
\label{table_ann2snn}
\begin{tabular}{cccccccc}
\toprule
Dataset & Metric & \multicolumn{5}{c}{ANN-to-SNN} & SNN with HTAF \\
\cmidrule(lr){3-7} \cmidrule(lr){8-8}
 &  & $T=10$ & $T=30$ & $T=50$ & $T=100$ & $T=200$ & $T=4$ \\
\midrule
\multirow{2}{*}{\textsc{CIFAR-10}}
& Accuracy 
& 10.0\% & 93.4\% & 37.1\% & 81.0\% & 92.1\%  & 91.0\% \\
& Inference time 
& 8 sec & 27 sec & 46 sec & 85 sec & 183 sec & 3 sec \\
\midrule
\multirow{2}{*}{\textsc{CIFAR-100}}
& Accuracy 
& 1.0\% & 3.5\% & 12.8\% & 47.8\% & 68.9\%  & 66.9\% \\
& Inference time 
& 10 sec &  25 sec & 40 sec & 81 sec & 160 sec & 3 sec \\
\bottomrule
\end{tabular}
\end{table*}

Here, we conduct experiments to compare our method with the ANN-to-SNN conversion method (\cite{rueckauer2017conversion}). 
We implement the ANN-to-SNN code from \cite{doi:10.1126/sciadv.adi1480}.
We use ResNet-18 as the backbone image model for ANN-to-SNN method and hyperparameters are turned as in Appendix \ref{app:detials_all_exp}.
A SNN with HTAF refers to a model that is trained by replacing the Heaviside activation function with HTAF, while at inference time, the activation function is reverted to the Heaviside function.

Table \ref{table_ann2snn} presents the accuracy of the ANN-to-SNN method across different time steps T, along with the accuracy of SNNs with HTAF.
Although ANN-to-SNN achieves higher accuracy at large time steps (e.g., T=200), this comes at a substantial cost in inference time, which increases by approximately 60 times.
In contrast, our method maintains competitive accuracy with significantly fewer time steps (e.g., T=4), resulting in substantially efficiency inference.

\newpage

\section{Additional experiments for Deep Heaviside neural Networks}

\subsection{Prediction performance analysis based on Proposition \ref{thm:heavi_approx}}

\label{app_selct_alpha01}

In this section, we conduct experiments to evaluate the prediction performance of DNNs using HTAF as the activation function under the lower bounds of $\alpha_{0}$ and $\alpha_{1}$ in Proposition \ref{thm:heavi_approx}.
The DNN architecture consists of three hidden layers with 256, 256, and 128 nodes, respectively.
We use the Adam optimizer with a learning rate of 0.001 and train the model for 1,000 epochs.
Experiments are conducted on \textsc{Churn} dataset.

Recall that the lower bounds of $\alpha_{0}$ and $\alpha_{1}$ are given by
\begin{align*}
\alpha_{1} &> \log \left(\frac{1-\epsilon}{\epsilon}\right), \\
\alpha_{0} &\geq \frac{1}{2\epsilon}
\log \left(
\frac{\alpha_{1} + \log\left(\frac{1-\epsilon}{\epsilon}\right)}
{\alpha_{1} - \log\left(\frac{1-\epsilon}{\epsilon}\right)}
\right).
\end{align*}

Specifically, for $\epsilon = 0.1$, the lower bound of $\alpha_{1}$ is 2.20 and the corresponding lower bounds of $\alpha_{0}$ for different values of $\alpha_{1}$ are reported in Table \ref{table_lower_bounds}.
Figure \ref{fig_alpha01_plot} presents the AUROC and binarity score with respect to the lower bound of $\alpha_{0}$ and the value of $\alpha_{1}$ reported in Table \ref{table_lower_bounds}.
The definition of the binarity score is provided in Appendix \ref{app_addexp_bin_concept}.

Theoretically, increasing $\alpha_{1}$ and decreasing $\alpha_{0}$ should improve prediction performance, as this leads to larger gradients in the tail region. 
However, empirical results show that while moderate adjustments can improve prediction performance, excessively large $\alpha_{1}$ or overly small $\alpha_{0}$ induce training instability and degrade prediction performance.
Therefore, careful selection of $\alpha_{0}$ and $\alpha_{1}$ is required.

\begin{table}[h]
\centering
\caption{\textbf{Lower bounds of $\alpha_{0}$ for different values of $\alpha_{1}$.}}
\label{table_lower_bounds}
\begin{tabular}{ccccccccccc}
\toprule
$\alpha_{1}$ value & 5 & 10 & 20 & 30 & 40 & 50 & 100 & 500 & 1000 & 5000 \\ \midrule 
Lower bound of $\alpha_{0}$ & 4.72 & 2.24 & 1.11 & 0.74 & 0.55 & 0.44 & 0.22 & 0.05 & 0.022 & 0.005 \\ \bottomrule
\end{tabular}%
\end{table}

\begin{figure}[h]
    \centering
    \includegraphics[width=1.0\linewidth]{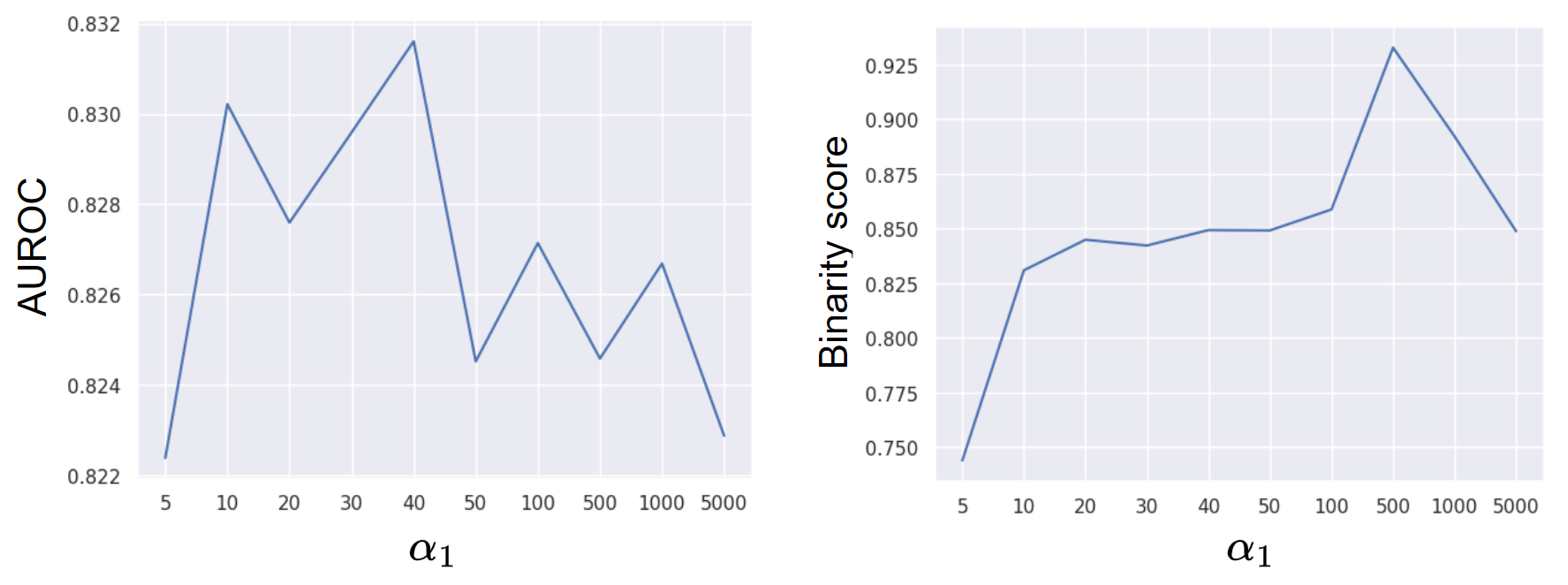}
    \caption{\textbf{Plots of the AUROC and binarity score depends on various $\alpha_{0}$ and $\alpha_{1}$.}}
    \label{fig_alpha01_plot}
\end{figure}

\subsection{Analysis of vanishing gradients}
\label{app:add_grad}

\subsubsection{Mitigating gradient vanishing by reducing $\alpha_{0}$}

\label{app_grad_2}

We aim to investigate whether gradient vanishing can be alleviated by reducing $\alpha_{0}$ while keeping $\alpha_{0}\alpha_{1}$ fixed. 
The DNN architecture is set to [256, 256, 128], and $\beta_{0}$ is fixed at 22. 
We consider three candidates (2, 11), (8, 2.75), and (11, 2) for $(\alpha_{0}, \alpha_{1})$.
Experiments are conducted on \textsc{Abalone} dataset.

We examine the gradients at each layer during training to investigate whether gradient vanishing occurs.
Specifically, following \cite{glorot2010understanding}, we compute the logarithm of the average L2 norm of the gradients of the loss with respect to the pre-activations at each layer over epochs.
We also track the training loss over epochs. 

Figure \ref{fig_gv_plots} presents results for a DNN architecture with hidden layer sizes (256, 256, 128) using HTAF, ReLU, and Scaled Sigmoid.
Here, Batch normalization is not applied in these experiments.

While gradient vanishing is observed for Scaled Sigmoid and HTAF with (8, 2.75) and (11, 2), it is not observed for HTAF with (2, 11) and ReLU.
This suggests that adjusting $\alpha_{0}$ can resolve the vanishing gradient problem.

\begin{figure}[h]
    \centering
    \includegraphics[width=1.0\linewidth]{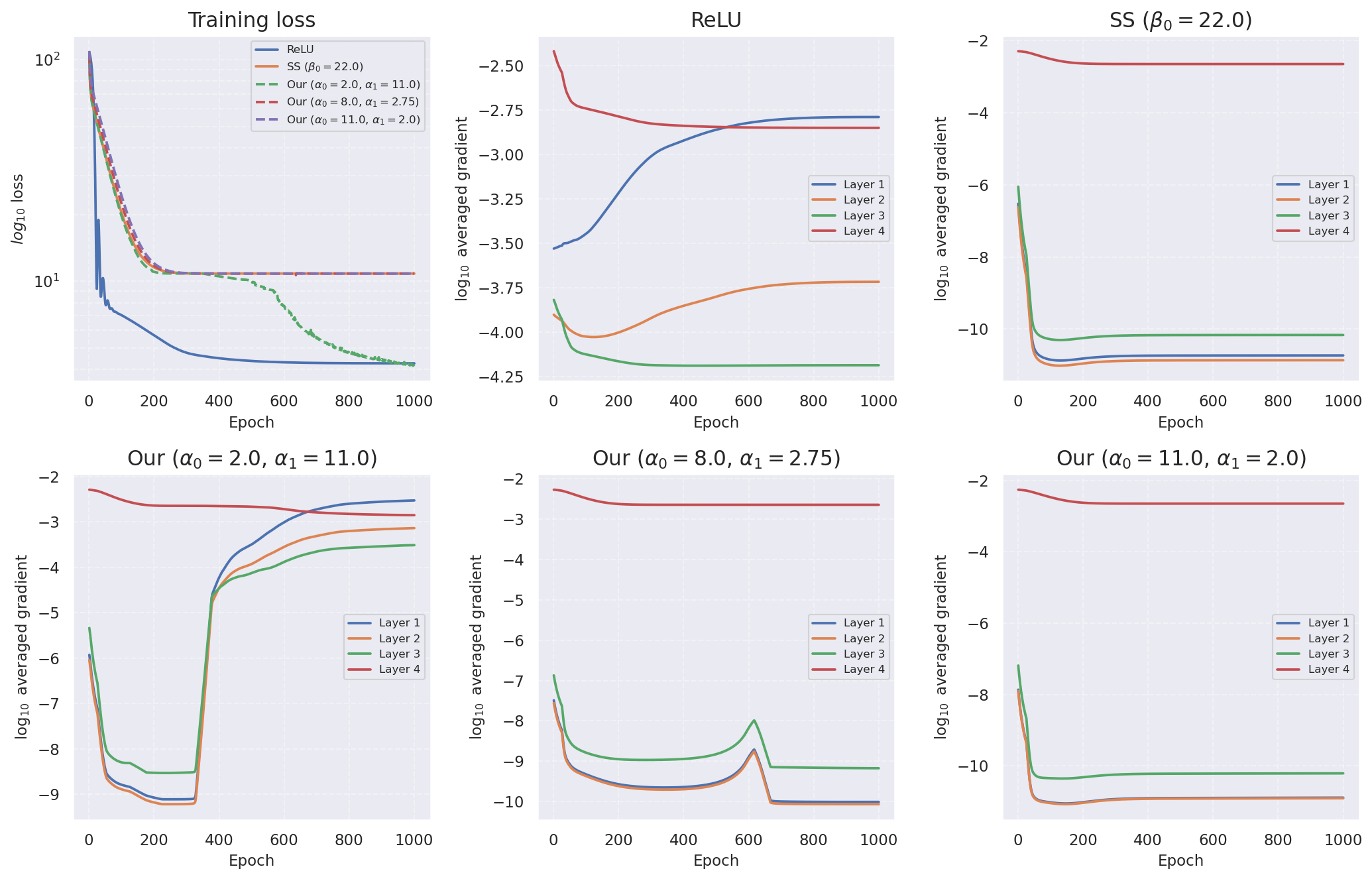}
    \caption{\textbf{Plots of the loss and the logarithm of the average L2 norm of the gradients over epochs for all models.}}
    \label{fig_gv_plots}
\end{figure}

\subsubsection{Analysis of vanishing gradients in various DNN architectures}

\label{app_grad_1}
In this section, we conduct experiments to evaluate the vanishing gradient phenomenon in DNNs when using HTAF, ReLU, and Scaled Sigmoid as activation functions.
We set $\alpha_{0}=3, \alpha_{1}=7.3$ and $\beta_{0}=22$.
We consider various DNN architectures. 
Here, we do not apply batch normalization to each layer.
Experiments are conducted on \textsc{Abalone} dataset.

As described in Appendix \ref{app_grad_2}, we compute the average gradient with respect to the pre-activations at each layer across various DNN architectures.
We investigate the occurrence of gradient vanishing using the average gradients from the first layer closest to the input.
The results are reported in Figure \ref{fig_gv_arch_plots} and Table \ref{table_gradient_vanishing_presence}.
Here, we consider gradient vanishing to occur when the L2 norm of the average gradient value in the first layer is smaller than $10^{-5}$.
Gradient vanishing consistently occurs when using Scaled Sigmoid, whereas it is not observed with ReLU. 
For HTAF, gradient vanishing emerges in networks with five hidden layers.

To mitigate this issue, we decrease $\alpha_{0}$ while keeping the product $\alpha_{0}\alpha_{1}$ constant for DNNs with five hidden layers where gradient vanishing is observed. 
However, this adjustment does not consistently improve prediction performance, nor does it fully resolve the vanishing gradient problem. 

In some cases, as shown in Appendix \ref{app_grad_2}, reducing $\alpha_{0}$ alleviates gradient vanishing and leads to improved performance. 
Overall, these results suggest that tuning $\alpha_{0}$ can partially mitigate gradient vanishing, but does not provide a complete solution.

\begin{figure}[h]
    \centering
    \includegraphics[width=1.0\linewidth]{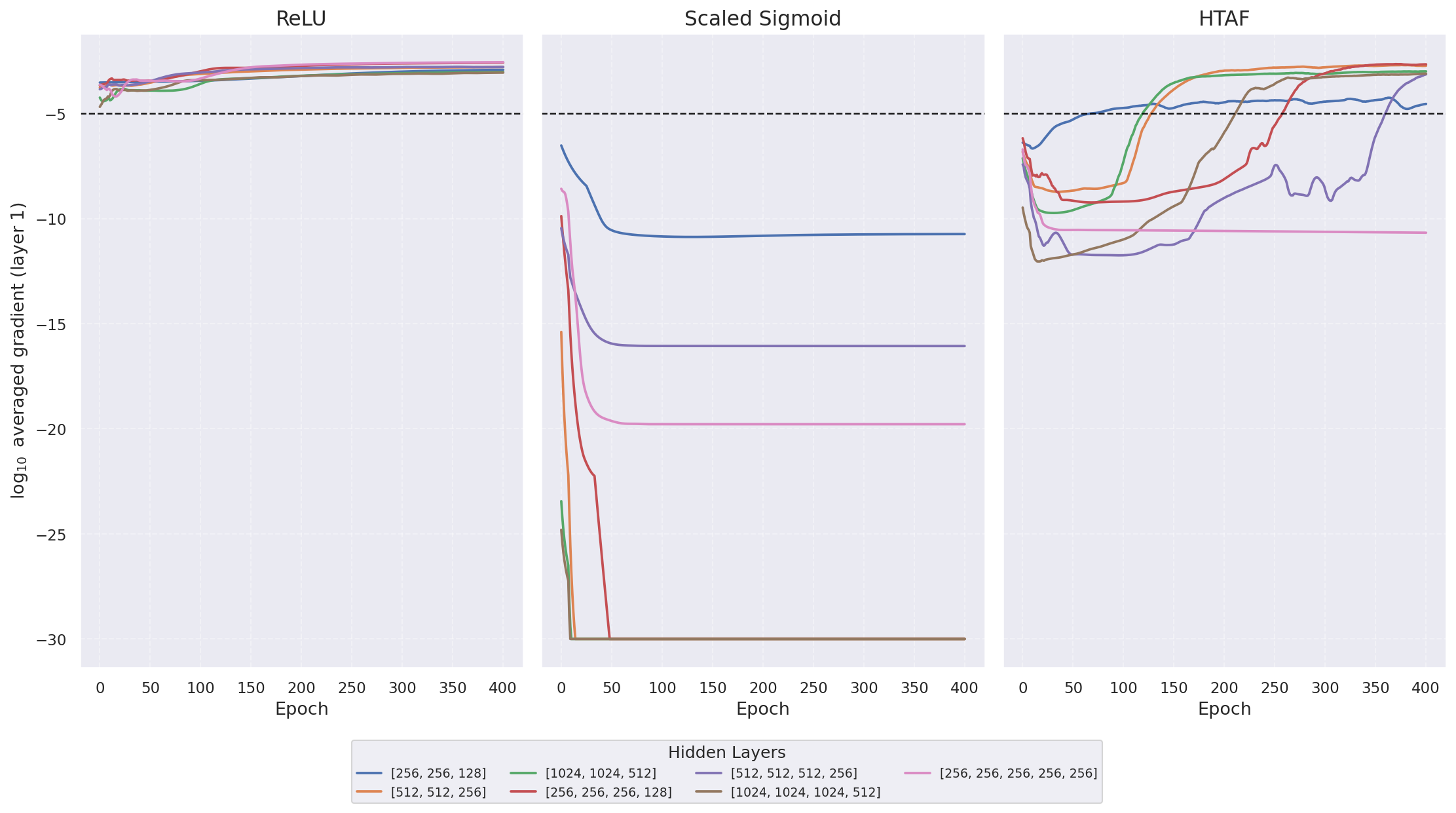}
    \caption{\textbf{Plots of the logarithm of the average L2 norm of layer 1 gradients over epochs for all models.}}
    \label{fig_gv_arch_plots}
\end{figure}

\begin{table}[h]
\centering
\caption{\textbf{Results on the presence of gradient vanishing across different DNN architectures.}}
\label{table_gradient_vanishing_presence}
\begin{tabular}{cccc}
\toprule
Hidden Layer Dimensions of the DNN & ReLU & HTAF & Scaled Sigmoid \\
\midrule
$[256, 256, 128]$ & X & X & O \\
$[512, 512, 256]$ & X & X & O \\
$[1024, 1024, 512]$ & X & X & O \\
$[256, 256, 256, 128]$ & X & X & O \\
$[512, 512, 512, 256]$ & X & X & O \\
$[1024,1024,1024,512]$ & X & X & O \\
$[256,256,256,256,256]$ & X & O & O \\
\bottomrule
\end{tabular}
\end{table}

\subsection{Experiments on Deep Heaviside neural Networks for tabular datasets}

\label{app_tabular_dhn}

In this section, we compare DHNs trained using our HTAF, STE and SS on tabular datasets including \textsc{Wine}, \textsc{Abalone}, \textsc{FICO} and \textsc{Churn}.
The DNN architecture consists of three hidden layers with 256, 256, and 128 nodes.

During training, we replace all activation functions in the DNN with HTAF.
For inference, we use the Heaviside function instead of HTAF.
We refer this model as DNNs with HTAF.
Similarly, we denote the DHN trained by approximating the Heaviside function with Scaled Sigmoid (SS) function as the DNN with SS.
The DHN trained using STE method is referred to as the DHN with STE.

For $\alpha_{0}$, $\alpha_{1}$ and $\beta_{0}$, we choose optimal parameters from the candidate in Appendix \ref{app:details_snn}.
For STE method, we use the candidate surrogate gradients in Appendix \ref{app:details_snn}.

For regression datasets, we use Root Mean Square Error (RMSE) and for classification datasets, we use Area Under ROC curve (AUROC) as the prediction performance measures.
We divide the dataset into training, validation, and test sets with a 70/10/20 ratio. 
The validation set is used to determine the optimal hyperparameters, and the test set is used to evaluate the prediction performance of the trained models.
We repeat this random splitting procedure five times to obtain five measures for prediction performance.

Table \ref{table_DHN_Result} presents that the averages and standard errors of the performance measurement of the DNN, the DNN with HTAF, the DNN with SS and the DHN with the STE on \textsc{Wine}, \textsc{Abalone}, \textsc{Churn} and \textsc{FICO} datasets.
The results indicate that the DNN with HTAF achieves markedly better prediction performance than the DNN with SS and the DHN with STE method.

\begin{table}[h]
    \centering
    \footnotesize
    \caption{\textbf{Results of prediction performance on various tabular datasets.}}
    \label{table_DHN_Result}
    \begin{tabular}{cccccc}
        \toprule
        Dataset & Measure & DNN & DNN + HTAF & DNN + SS & DHN + STE \\
        \midrule
        \textsc{Wine} & RMSE $\downarrow$ & 0.693 (0.01) & 0.730 (0.01) & 0.745 (0.01) & 0.794 (0.01)\\
        \textsc{Abalone} & RMSE $\downarrow$ & 2.003 (0.02) &   2.106 (0.02) & 2.403 (0.08) & 2.620 (0.07)\\
        \textsc{Churn} & AUROC $\uparrow$ & 0.831 (0.01) & 0.828 (0.01)  & 0.828 (0.02) & 0.816 (0.01)\\
        \textsc{FICO} & AUROC $\uparrow$ & 0.787 (0.01) & 0.778 (0.01) & 0.778 (0.01) &0.773 (0.01)\\
        \bottomrule
    \end{tabular}
\end{table}

\subsection{Experiments on Deep Heaviside neural Networks for Large Language Models}
In this section, we evaluate the effectiveness of replacing activation functions in LLM with HTAF on commonsense reasoning tasks.
We consider two approaches. 
First, we replace the activation function with HTAF during training and revert it to the Heaviside activation function at inference time.
We refer this approach as HTAF method.
Second, we replace the activation function with the Heaviside function and train the model using STE.
We refer this approach as STE method.

We use the pre-trained Llama-2-7b (\cite{touvron2023llama}) and Qwen-3.5-9b (\cite{qwen3.5}) models as the backbone architectures, initialized from the pretrained checkpoints provided at \url{https://huggingface.co/meta-llama/Llama-2-7b-hf} and \url{https://huggingface.co/Qwen/Qwen3.5-9B}, respectively.
The decoders of both Llama-2-7b and Qwen-3.5-9b consist of 32 layers, each of which contains a feedforward module with the SwiGLU (\cite{shazeer2020glu}) activation function.
To analyze the impact of HTAF in deep layers, we progressively replace the activation functions in the last $k$ layers of the decoder, where $k$ takes values in $\{4, 8, 12, 16, 20, 24, 28, 32\}$.
We consider three common-sense reasoning benchmarks : Winogrande-small (WG-S, \cite{sakaguchi2021winogrande}), Winogrande-medium (WG-M, \cite{sakaguchi2021winogrande}), OpenBookQA (OBQA, \cite{mihaylov2018can}).

We apply both HTAF method and STE method to the pre-trained Llama-2-7b and the pre-trained Qwen-3.5-9b, and fine-tune the model on each dataset using Low-Rank Adaptation (LoRA, \cite{hu2022lora}). 
For training, we use the AdamW optimizer with a linear learning rate scheduler, a warmup ratio of 0.06, and a learning rate of \(10^{-4}\). 
For hyperameters of LoRA, we use rank \(r = 8\) and scaling factor \(\alpha = 16\), and each stage is trained for 5,000 steps.

Figure \ref{fig_LLM_result} presents the accuracies of Llama-2-7b and Qwen-3.5-9b as a function of $k$.
As $k$ increases, the overall prediction performance of the model degrades; however, the HTAF method exhibits a smaller performance drop compared to the STE method. 
These results suggest that HTAF provides a more stable alternative when replacing activation functions with the Heaviside function in LLMs.

\begin{figure}[h]
    \centering
    \includegraphics[width=1.0\linewidth]{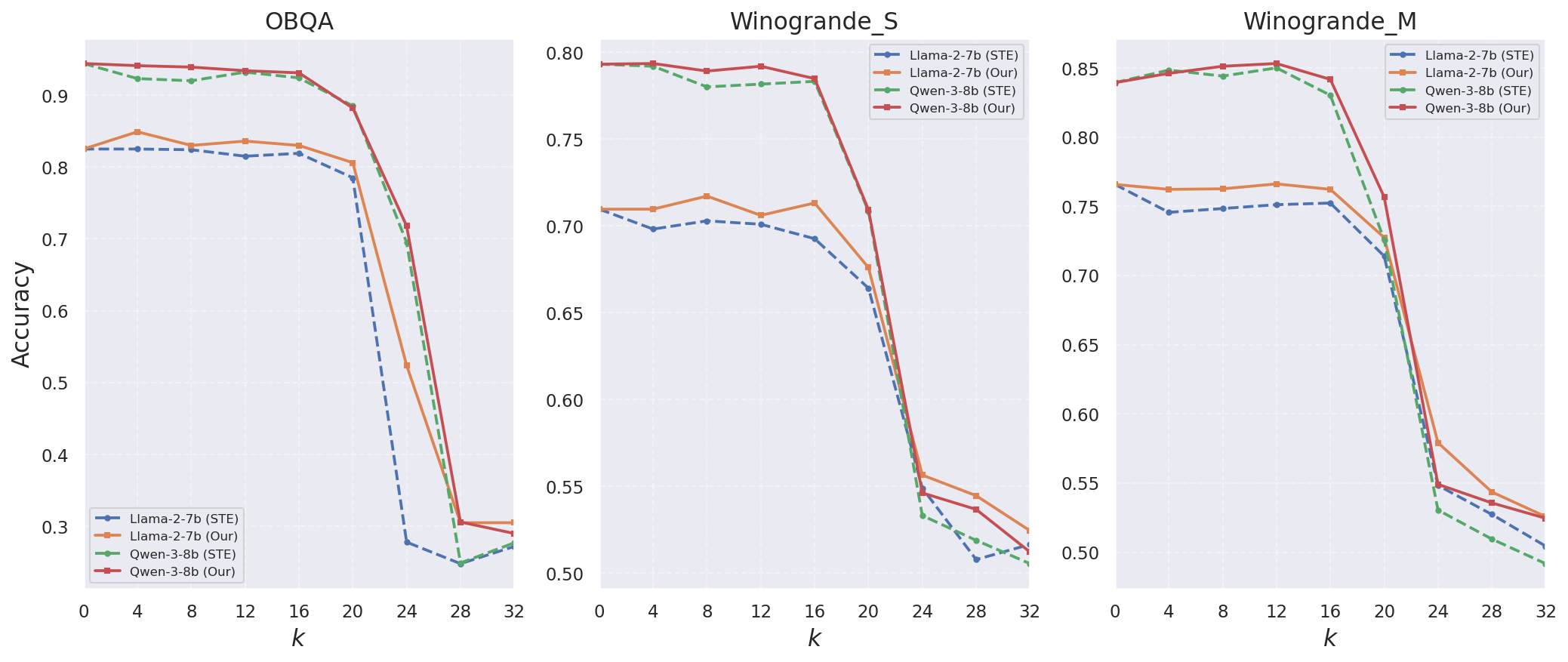}
    \caption{\textbf{Results of the prediction accuracy of LLM depends on various $k$.}}
    \label{fig_LLM_result}
\end{figure}

\newpage

\section{Additional Experiments for ICBM}
\label{app_ICBM_results}

\subsection{Additional results in Section \ref{sec:interpret}}
\label{app_LLM_responce}

Here, we provide the LLM responses for each node described in Section \ref{sec:interpret}.






\begin{figure}[h]
    \centering
    \includegraphics[width=0.9\linewidth]{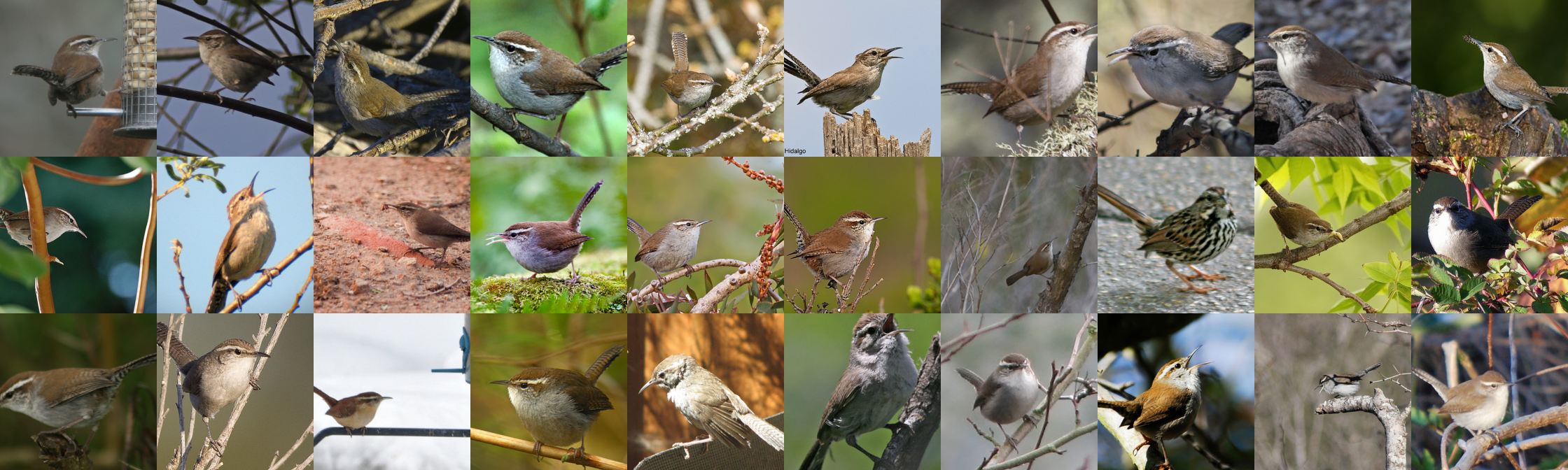}
    \caption{Images of \textit{Bewick Wren}.}
    \label{fig:bewick_wren}
\end{figure}

\paragraph{135th node.}
\textit{Bewick's Wren} is a terrestrial perching bird that typically forages in brush, shrubs, woodland edges, bark crevices, and dense vegetation, searching for small insects and other invertebrates. Hence, they are likely to be activated.

\begin{tcolorbox}[colback=gray!10, colframe=gray, title=LLM Response]
\begin{footnotesize}
\begin{verbatim}
Distinguishing Concept
Ecomorphological Niche: Terrestrial Woodland Foragers vs. Aquatic, Marine, 
                        and Aerial Specialists

The primary concept that separates these two groups is their ecological niche, 
which dictates their habitat, morphological adaptations (bill shape, foot 
structure), and foraging strategies. Group A consists entirely of land-dwelling 
perching birds adapted to forests and brush, while Group B is heavily dominated 
by waterbirds, seabirds, and birds adapted to high-speed aerial hunting.
\end{verbatim}
\end{footnotesize}
\end{tcolorbox}

\paragraph{186th node.}
Since \textit{Bewick's Wren} has slender, pointed, slightly decurved bill suited for precisely capturing small insects and invertebrates, it is likely to be activated in the node.
\begin{tcolorbox}[colback=gray!10, colframe=gray, title=LLM Response]
\begin{footnotesize}
\begin{verbatim}
Distinguishing Concept
Bill Morphology: Hooked/Spear-like vs. Stout or Conical

The primary concept separating these two groups is the functional morphology 
of the bill. Group A is characterized by bills specialized for capturing mobile
prey, including wide gapes for aerial insect capture, hooked bills for grasping 
or tearing prey, and spear-like bills for plunge-diving. In contrast, Group B is 
characterized by bills adapted for applying mechanical force, such as stout 
conical bills for cracking seeds and sturdy chisel-like bills for excavating wood.
\end{verbatim}
\end{footnotesize}
\end{tcolorbox}

\vspace{2mm}
\paragraph{690th node.}
\textit{Bewick's Wren} belongs to unactivated group because it is a cryptic, substrate-bound thicket dweller: it lives in dense woody cover, probes bark and leaf litter for hidden insects.
\begin{tcolorbox}[colback=gray!10, colframe=gray, title=LLM Response]
\begin{footnotesize}
\begin{verbatim}
Distinguishing Concept
Ecological Biome Dependency: Grassland/Prairie Avifauna vs. Thicket, Shrubland,
                             and Wetland Avifauna

The defining concept here is the macro-habitat and vegetative density these 
birds require to survive and breed. Group A represents an ecological guild 
deeply tied to open, low-growth environments (prairies, agricultural fields, 
and tundra). Group B represents a guild dependent on dense, woody secondary 
growth (briars, thickets, and scrub) or aquatic/riparian zones.
\end{verbatim}
\end{footnotesize}
\end{tcolorbox}

\vspace{2mm}
\paragraph{153rd node.}
The \textit{Bewick's Wren} firmly belongs to the unactivated group because its preference for dense, brushy habitats aligns with that group's thicket-dwelling ecological profile.
\begin{tcolorbox}[colback=gray!10, colframe=gray, title=LLM Response]
\begin{footnotesize}
\begin{verbatim}
Distinguishing Concept
Bill Ecomorphology & Foraging Mechanics: Precision Snatching & Gleaning vs. 
                                         Mechanical Extraction & Crushing

The defining concept that separates these two groups is the structural engineering
of their bills and the physical force they apply to secure food. 
Group A is characterized by "light-duty," highly precise bills designed to 
snatch exposed, unarmored food (like flying insects or swimming fish) using speed 
and agility. Group B is characterized by "heavy-duty" or highly specialized bills 
designed to apply significant mechanical force to break into armored food sources 
(wood, hard seeds, pine cones) or probe deeply into tight crevices.
\end{verbatim}
\end{footnotesize}
\end{tcolorbox}

\paragraph{217th node.}
As can be seen in Figure \ref{fig:bewick_wren}, \textit{Bewick's Wren} lacks the bright yellow carotenoids that define the unactivated group, it is more likely to be activated in the node.
\begin{tcolorbox}[colback=gray!10, colframe=gray, title=LLM Response]
\begin{footnotesize}
\begin{verbatim}
Distinguishing Concept
Visual Plumage Pigmentation: Yellow/Carotenoid Dominance vs. Structural, Melanin, 
                             and Red Coloration

While there are ecological differences between the two groups, the most defining
and unifying factor—particularly in datasets categorizing these specific bird 
species (such as the CUB-200 ornithology dataset)—is their visual appearance, 
specifically the dominant color of their plumage. Group B is overwhelmingly 
characterized by bright yellow plumage (driven by dietary carotenoids), whereas 
Group A is defined by the absence of yellow, featuring structural blues, deep 
reds, and heavily melanistic grays, blacks, and browns.
\end{verbatim}
\end{footnotesize}
\end{tcolorbox}

\vspace{1mm}
\textit{Bewick Wren} exhibits neither `has\_primary\_color::black' nor `has\_wing\_color::orange', and is therefore activated at this node.

\vspace{2mm}
\subsection{Definition of importance score for the estimated implicit concept}

\label{app_def_score}

Let $\mathbf{c} = (c_{1},...,c_{N}) \in \{0,1\}^{N}$ be a implicit concept vector (i.e., the outputs of the feature extractor), where $N$ is the number of nodes in the final layer of the feature extractor.
Since the classifier is a linear model, for a given class $k$, the logit value is defined as
\begin{align}
\text{logit}(k) = \sum_{j=1}^{N}w_{j,k}c_{j} + b_{k},
\label{eq:boundary_k}
\end{align}
where $w_{j,k}$'s are the weights and $b_{k}$ is the bias of the linear model.

That is, the importance of each estimated implicit concept is quantified via Linear model's weights.
Specifically, we define the importance score of the estimated implicit concept of the $j$th node for class $k$ as the absolute value of weight, i.e., $|w_{j,k}|$. 
Higher importance score indicates that the corresponding node contributes more significantly to the classification of the given class.

\subsection{Additional results for class-specific activation values}
\label{app_class_specific}

In this section, we provide additional results for all classes corresponding to the experiment in Section \ref{sec:class-spec} on CIFAR-10 dataset. 
Figure \ref{fig_hist_all} presents these results.
It implies that the implicit concepts estimated by ICBM are highly class-specific, whereas those learned by a standard image model are not.

\begin{footnotesize}
\begin{figure}[h]
    \centering
    \includegraphics[width=1\linewidth]{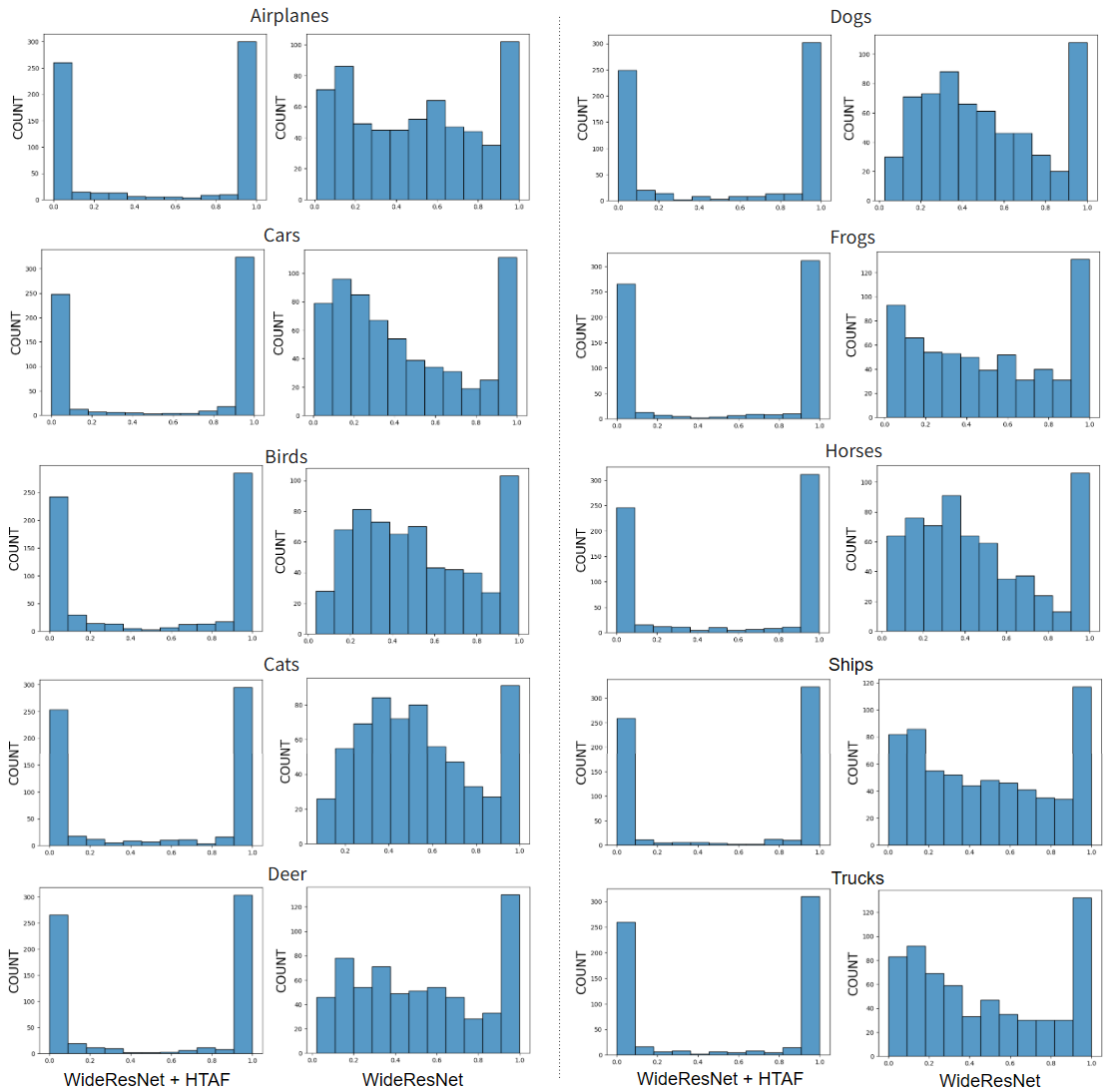}
    \caption{\textbf{Histograms of the proportions for all classes.}}
    \label{fig_hist_all}
\end{figure}
\end{footnotesize}

\subsection{Additional experiments on Heaviside approximation and concept meaningfulness}

\label{app_addexp_bin_concept}

\begin{table*}[h]
\centering
\scriptsize
\caption{\textbf{Comparison between HTAF, SS, STE and BM.}}
\vskip -0.2cm
\label{table_scaledsigmoid}
\begin{tabular}{ccccccccc}
\toprule
Dataset             & Measure  & HTAF 
& \hskip -0.3cm \begin{tabular}{c}
SS \\
($\beta_{0} = 10^{1}$)\\
\end{tabular} & \hskip -0.3cm \begin{tabular}{c}
SS \\
($\beta_{0} = 50$)\\
\end{tabular} & \hskip -0.3cm \begin{tabular}{c}
SS \\
($\beta_{0} = 10^{2}$)\\
\end{tabular} & \hskip -0.3cm \begin{tabular}{c}
SS \\
($\beta_{0} = 10^{3}$)\\
\end{tabular} & \hskip -0.3cm \begin{tabular}{c}
STE\\
\end{tabular} & \hskip -0.3cm \begin{tabular}{c}
BM \\
\end{tabular} 
\\ \midrule
\multirow{3}{*}{\textsc{CIFAR-10}} & Accuracy  & 96.1\% & 95.5\% & 95.7\% & 95.3\% & 79.3\% & 94.3\% & 90.0\%\\  
           & Binarity score & 98.4\% &  3.9\% & 40.4\% & 73.3 \% & 99.7 \% & 100.0\% & --\\ 
           & Concept accuracy & 96.1\% &  91.2 \% & 91.6\% &91.5\% & 73.8 \% & 89.7\% & 88.4\% \\ \midrule
\multirow{3}{*}{\textsc{CIFAR-100}} & Accuracy & 79.5\% & 77.8\% & 78.6\% & 78.2 \% & 56.3\% & 74.8\% & 70.7\%\\  
                  & Binarity score & 98.3\% & 14.3 \% & 60.5\% & 74.0 \% & 99.8\% & 100.0\% &  --\\
                  & Concept accuracy & 84.6\% & 77.0\% & 78.7\% & 78.8\% & 67.8\%  & 76.1\% & 70.8\%\\
                  \bottomrule
\end{tabular}
\end{table*}

In this section, we evaluate whether our method approximates the Heaviside activation function while achieving strong prediction performance. 
We also investigate whether the estimated implicit concepts contribute meaningfully to the classification decision.

To measure how close the activation outputs are to zero or one, we use binarity score, defined as \textit{the proportion of activation outputs that are greater than 0.99 or less than 0.01}.
Higher binarity score indicates that the activation is closer to the binary representations.

Furthermore, we evaluate whether the implicit concepts estimated by ICBMs capture meaningful information for image classification under the following assumption: if the presence or absence of an estimated implicit concept can be predicted from the true class labels of images, then the concept is likely to encode semantically meaningful information for classification.
Therefore, for each node, we train a logistic regression model to predict the absence of the estimated implicit concept using only the true class label as input. 
We measure the meaningfulness of the implicit concepts estimated by the model by averaging the accuracies across all nodes, and refer to this metric as a concept accuracy. 
A higher concept accuracy indicates that the model estimates more meaningful implicit concepts.

We use WideResNet as the backbone model and conduct experiments on the CIFAR-10 and CIFAR-100 datasets.
We consider four methods, including ours, as follows:
\begin{itemize}
    \item HTAF : The activation function in the final layer of the feature extractor is replaced with HTAF during training, and the outputs are binarized at inference time.
    \item SS : The activation function in the final layer of the feature extractor is replaced with Scaled Sigmoid during training, and the outputs are binarized at inference time.
    \item BM : At inference time, the outputs of the feature extractor are binarized based on a median threshold.
    \item STE : Replaces the activation with the Heaviside function and trains it using Straight-Through Estimator (STE).
\end{itemize}

For all methods, hyperparameters are chosen as described in Appendix \ref{app:detials_all_exp}.
For BM, the training-related hyperparameters are tuned in the same manner as in the other methods.
Table \ref{table_scaledsigmoid} presents the averages of accuracies, binarity scores and concept accuracies for all methods.

We observe that HTAF achieves the best overall performance across all metrics. 
For SS, binarity score increases as $\beta_{0}$ becomes larger; however, training becomes unstable, leading to degraded prediction performance.
In the case of BN, replacing the activation function with the Heaviside activation function at inference time substantially degrades prediction performance.

\subsection{Effect of post-binarization on model prediction performance}

We conduct an additional experiment to assess the impact of inference-time binarization, defined as replacing the activation function with the exact Heaviside function during inference. 
Specifically, we measure how much the prediction performance degrades after this replacement.
We set the WideResNet as the backbone image model and consider CIFAR-10 and CIFAR-100 datasets.
Table \ref{table:binarization_result} presents the average accuracy before and after inference-time binarization for HTAF, SS, and BM defined in Appendix \ref{app_addexp_bin_concept}, over five trials.
We observe that HTAF exhibits the smallest degradation in prediction performance after binarization, indicating that it is more robust to replacing the activation function with the exact Heaviside function at inference time.

\begin{table}[h]
    \centering
    \caption{\textbf{Averaged accuracy before and after inference-time binarization.}}
    \label{table:binarization_result}
    \begin{tabular}{ccccccc}
        \toprule
        \multirow{2}{*}{Dataset} 
        & \multicolumn{2}{c}{HTAF} 
        & \multicolumn{2}{c}{SS} 
        & \multicolumn{2}{c}{BM} \\
        \cmidrule(lr){2-3} \cmidrule(lr){4-5} \cmidrule(lr){6-7}
        & Before & After & Before & After & Before & After \\
        \midrule
        \textsc{CIFAR-10}  
        & 96.1\% & 96.1\% 
        & 96.1\% & 95.7\% 
        & 95.6\% & 90.0\% \\
        \textsc{CIFAR-100} 
        & 79.5\% & 79.5\% 
        & 79.0\% & 78.7\% 
        & 79.2\% & 70.7\% \\
        \bottomrule
    \end{tabular}
\end{table}

\subsection{Analysis of representations by HTAF}

In this section, we analyze the outputs of the feature extractor in ICBMs when using HTAF compared to ReLU.
We use the CIFAR-100 dataset, and adopt WideResNet as the backbone image model.
Figure \ref{fig:act_output} presents the histogram of feature extractor outputs aggregated over all feature dimensions and all samples in the test dataset.
We observe that, when using HTAF, most of the feature extractor outputs are concentrated near zero or one, whereas this behavior is not observed when using ReLU.

\begin{figure}[h]
    \centering
    \includegraphics[width=1.0\linewidth]{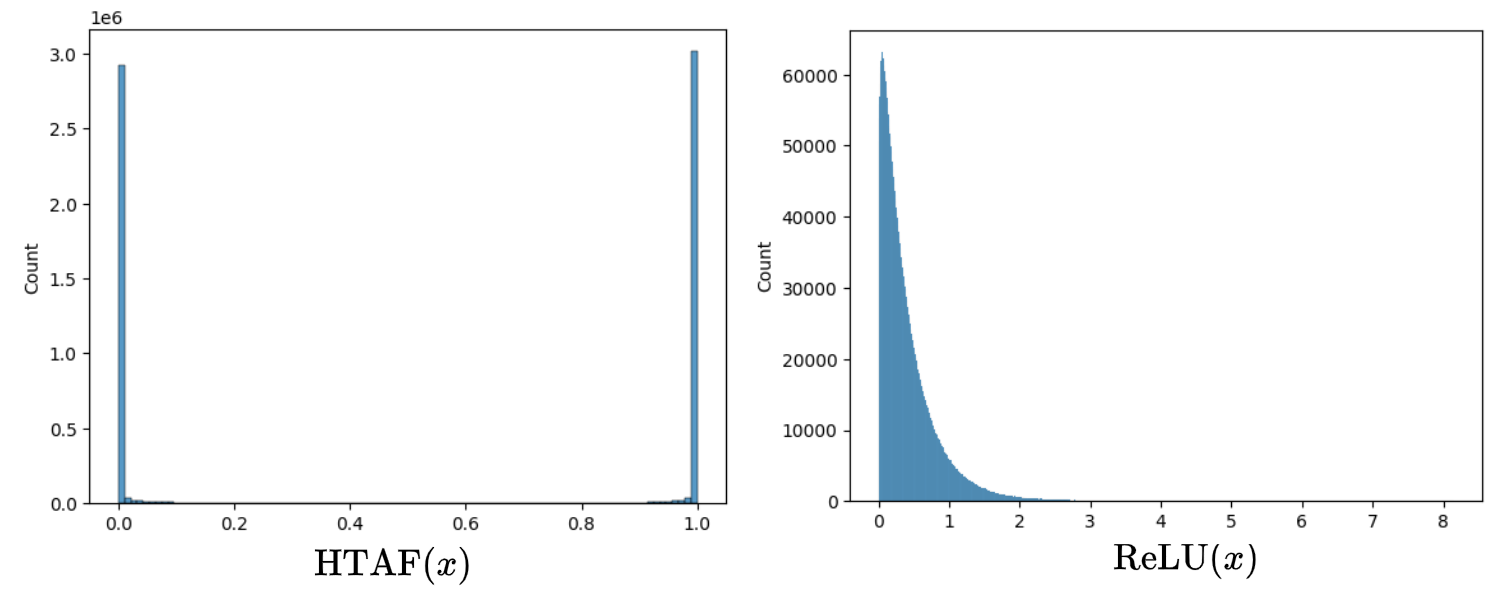}
    \caption{\textbf{Plots of the outputs for HTAF and ReLU.}}
    \label{fig:act_output}
\end{figure}




\subsection{Comparision with CBMs}
\label{app:compare_CBM}

Here, we conduct additional experiment to compare ICBMs with original CBMs (\cite{koh2020concept}) and Label-free CBMs (\cite{oikarinen2023label}) on CIFAR-10, CIFAR-100 and CUB-200 datasets.
The backbone image model is set to WideResNet. 
For Label-free CBMs, we use the official code provided by \cite{oikarinen2023label}, with minor modifications.
For CBMs, we implement the model from scratch.
We employ SGD as the optimizer with a learning rate of $10^{-4}$. 
All other hyperparameters for Label-free CBMs are set to the default values in the official code.
Table \ref{table:result_CBM} presents the prediction performance of ICBMs, CBMs, and Label-free CBMs. 
The best results are highlighted in \textbf{bold}.
For CIFAR-10 and CIFAR-100 datasets, original CBMs cannot be trained due to the absence of concept annotations. 
As shown in the table, ICBMs achieve substantially higher prediction performance compared to baseline methods.

\begin{table}[h]
\centering
\caption{\textbf{Results of accuracies of CBMs, Label-free CBMs and ICBMs on \textsc{CIFAR-10}, \textsc{CIFAR-100} and \textsc{CUB-200} datasets.}}
\label{table:result_CBM}
\begin{tabular}{cccc}
\toprule
Model & CIFAR-10 & CIFAR-100 & CUB-200 \\ \midrule
CBMs & -- & -- & 72.5\% \\
Label-free CBMs & 86.5\% & 65.1\% & 55.7\% \\
ICBMs (Ours) & \textbf{96.1}\% & \textbf{79.5}\% & \textbf{77.3}\% \\
\bottomrule
\end{tabular}%
\end{table}

\newpage

\end{document}